\RequirePackage{letltxmacro}
\LetLtxMacro{\LaTeXtextbf}{\textbf}
\LetLtxMacro{\LaTeXyear}{\year}
\documentclass{ieeeaccess_preprint}
\makeatletter
\gdef\@tfootnoteextra{This is an author-prepared preprint of the article published in \emph{IEEE Access}, vol.~14, pp.~93670--93693, 2026, doi: 10.1109/ACCESS.2026.3705574. IEEE Xplore: \url{https://ieeexplore.ieee.org/document/11571780}.
The published article is licensed under CC BY 4.0.}
\makeatother
\LetLtxMacro{\textbf}{\LaTeXtextbf}
\LetLtxMacro{\year}{\LaTeXyear}
\usepackage[english]{babel}
\usepackage{amsmath, amssymb}
\usepackage[linesnumbered, ruled, vlined, noend]{algorithm2e}
\usepackage{graphicx}
\usepackage{makecell}
\usepackage{multirow}
\usepackage{booktabs}
\usepackage{tikz}
\NewSpotColorSpace{PANTONE}
\AddSpotColor{PANTONE} {PANTONE3015C} {PANTONE\SpotSpace 3015\SpotSpace C} {1 0.3 0 0.2}
\SetPageColorSpace{PANTONE}%
\usepackage{pgfplots}
\pgfplotsset{compat=1.18}
\usetikzlibrary{arrows.meta, calc, decorations.pathreplacing}
\usepackage{cite}
\usepackage{amsthm}
\usepackage{xcolor}
\usepackage[T1]{fontenc}
\usepackage[breaklinks]{hyperref}
\usepackage{xurl}
\usepackage{array}
\usepackage{bm}
\makeatletter
\AtBeginDocument{\DeclareMathVersion{bold}
\SetSymbolFont{operators}{bold}{T1}{times}{b}{n}
\SetSymbolFont{NewLetters}{bold}{T1}{times}{b}{it}
\SetMathAlphabet{\mathrm}{bold}{T1}{times}{b}{n}
\SetMathAlphabet{\mathit}{bold}{T1}{times}{b}{it}
\SetMathAlphabet{\mathbf}{bold}{T1}{times}{b}{n}
\SetMathAlphabet{\mathtt}{bold}{OT1}{pcr}{b}{n}
\SetSymbolFont{symbols}{bold}{OMS}{cmsy}{b}{n}
\renewcommand\boldmath{\@nomath\boldmath\mathversion{bold}}}
\makeatother
\newtheorem{lemma}{Lemma}
\newtheorem{proposition}{Proposition}
\newcommand{\param}[1]{\texttt{#1}}
\newcommand{\dataset}[1]{\text{#1}}
\def\BibTeX{{\rm B\kern-.05em{\sc i\kern-.025em b}\kern-.08em
    T\kern-.1667em\lower.7ex\hbox{E}\kern-.125emX}}
\begin{document}
\history{Date of publication xxxx 00, 0000, date of current version xxxx 00, 0000.}
\doi{00.0000/ACCESS.0000.DOI}
\title{How Many Trees in a Random Forest? A Revisited Approach With Plateau Search and Optuna Integration}
\author{
\uppercase{Vadim A. Porvatov}\authorrefmark{1}, 
\uppercase{Andrey A. Dukhovny}\authorrefmark{1}, 
\uppercase{Andrey M. Lange\authorrefmark{2, 3}}} 
\address[1]{Sberbank, Moscow 117997, Russia (e-mail: vandporvatov@sberbank.ru, aadukhovny@sberbank.ru)}
\address[2]{Skolkovo Institute of Science and Technology (Skoltech), Moscow 121205, Russia (e-mail: a.lange@skoltech.ru)}
\address[3]{Federal Research Center ``Computer Science and Control" of Russian Academy of Sciences (FRC CSC RAS), Moscow 119333, Russia (e-mail: alange@frccsc.ru)}
\tfootnote{The work was supported by the grant for research centers in the field of AI provided by the Ministry of Economic Development of the Russian Federation in accordance with the agreement 000000C313925P4F0002 and the agreement with Skoltech No.~139-10-2025-033.}
\markboth
{V.A.~Porvatov et al.: How Many Trees in a Random Forest? A Revisited Approach}
{V.A.~Porvatov et al.: How Many Trees in a Random Forest? A Revisited Approach With Plateau Search}
\corresp{Corresponding author: Andrey Lange (a.lange@skoltech.ru)}
\begin{abstract}
Hyperparameter optimization (HPO) for Random Forest faces a specific difficulty in tuning the number of trees: the predictive score typically improves monotonically with ensemble size, so standard methods such as Tree-structured Parzen Estimator (TPE) and Hyperband require a predefined search range and often drive the estimate toward its right boundary. Early-stopping strategies avoid fixing such a range, but can be sensitive to score noise and prone to premature stopping.
To address this, we propose an integrated triplet-based plateau-search algorithm that removes the number of trees from the direct TPE search space
and still exploits information accumulated across HPO trials. 
The method adaptively tracks a near-minimal sufficient ensemble size by monitoring relative changes in the out-of-bag (OOB) score across a triplet of forest sizes and shifting this triplet accordingly. This yields an automated and user-interpretable procedure based on a tolerance parameter.
We also provide a theoretical analysis: we relate the proposed relative OOB-score criterion to the gap between the current and limiting scores, and derive an asymptotic variance estimate for the corresponding OOB-based absolute relative difference. 
Experiments show that the selected number of trees can differ substantially from the common heuristic: for most classical benchmark datasets it is smaller, whereas for some high-dimensional bioinformatics datasets, such as Arcene and Dorothea, it is larger.
The source code and reproducible experiments are available at
\url{https://github.com/lange-am/rf\_plateau\_hpo}.
\end{abstract}
\begin{keywords}
Machine Learning, Tree Ensembles, Random Forest, Ensemble Size, Number of Trees, Hyperparameter Search, Sequential Model-Based Optimization
\end{keywords}
\titlepgskip=-21pt
\maketitle
\section{Introduction}
\subsection{Random Forests: Overview and Advantages}
One of the most powerful machine learning models for both classification and regression tasks remains the 
Random Forest (Breiman~\cite{Breiman1996Bagging}, \cite{Breiman2001Random}), Breiman and Cutler~\cite{Breiman2003Manual},
Biau and Scornet~\cite{Biau2016Random}). 
It is an ensemble composed of classical decision trees, each trained independently from the others.
Alongside Gradient Boosting models 
(Schapire~\cite{Schapire1999Brief}, Friedman~\cite{Friedman2001Greedy, Friedman2002Stochastic}), 
tree-based ensembles continue to be state-of-the-art in predictive performance on tabular data, 
often rivaling deep learning algorithms. 
This has been noted in several recent studies: 
Borisov et al.~\cite{Borisov2024Deep}, 
Grinsztajn et al.~\cite{Grinsztajn2022Why}, 
and Shwartz-Ziv and Armon~\cite{shwartz2022tabular}.

Boosting algorithms are considered more powerful in terms of predictive accuracy than Random Forest, leading to a wider array of well-known implementations such as 
XGBoost (Chen and Guestrin~\cite{Chen2016Xgboost}), 
LightGBM (Ke et al.~\cite{Ke2017Lightgbm}), and 
CatBoost (Prokhorenkova et al.~\cite{Prokhorenkova2018Catboost}).
However, it was Random Forest that emerged as the leading predictive model in the large benchmark study by Fern{\'a}ndez-Delgado et al.~\cite{Fernandez-Delgado2014Do}, which employed 179 classifiers and 121 UCI datasets (Kelly et al.~\cite{Kelly2023Uci}).
Random Forest is implemented in the Python scikit-learn library 
(Pedregosa et al.~\cite{Pedregosa2011scikit-learn}) and in many R packages, including 
randomForest (Liaw and Wiener~\cite{Liaw2002Classification}), 
party (Hothorn et al.~\cite{Hothorn2006Unbiased}),
partykit (Hothorn and Zeileis~\cite{Hothorn2015partykit}),
ranger (Wright and Ziegler~\cite{Wright2017ranger}),
tuneRanger (Probst~\cite{probst2018tuneranger}; Probst et al.~\cite{Probst2019Hyperparameters}),
and the recently introduced optRF (Lange et al.~\cite{Lange2025optRF}).

The fundamental difference between boosting and forest lies in their learning mechanics and how they minimize a given loss function $L$. 
Let the full training set be denoted by 
$\mathcal{D} = \{(x_i, y_i)\}_{i=1}^n$.
For each tree $t$, a bootstrap sample $\mathcal{D}_t$ is drawn from $\mathcal{D}$ with replacement. 
The decision tree $h_t(x)$ is trained on $\mathcal{D}_t$ to minimize a loss function $L$ (e.g., log-loss or cross-entropy for classification or mean squared error for regression):
\begin{equation*}
h_t \approx \arg\min_{h} \sum_{(x_i, y_i) \in \mathcal{D}_t} L\big(y_i, h(x_i)\big).
\end{equation*}
The final prediction is the average of all trees:
\begin{equation}
\hat{y}_{\text{RF}}(x) = \frac{1}{T} \sum_{t=1}^{T} h_t(x).
\label{eq:rf_def}
\end{equation}

In contrast, Gradient Boosting builds its model sequentially. 
At each step $t$, a tree $h_t(x)$ is built to approximate the negative gradient of a ``differentiable'' loss $L$ with respect to the current ensemble $\hat{y}_{t-1}(x)$.
It solves:
\begin{equation*}
h_t \approx \arg\min_{h} \sum_{(x_i, y_i) \in \mathcal{D}_t} L\!\left( \; - \left. \frac{\partial L(y_i, F)}{\partial F} \right|_{F=\hat{y}_{t-1}(x_i)} ,\; h(x_i) \right).
\end{equation*}
For example, for a quadratic loss $L(y, F) = (y - F)^2$, this reduces to fitting the raw residuals $-L'_{F}(y_i, \hat{y}_{t-1}(x_i)) = 2(y_i - \hat{y}_{t-1}(x_i))$. 
The tree is then added to the ensemble with a learning rate $\eta$:
\begin{equation*}
\hat{y}_{\text{GB}}(x) = \hat{y}_{0}(x) + \eta \sum_{t=1}^{T} h_t(x).
\end{equation*}

Thus, both methods iteratively train trees $h_t(x)$ that minimize a loss $L$, but they differ in the first argument of $L$: Random Forest uses the original targets $y_i$, whereas boosting uses the negative gradient of the loss from the previous step.
In other words, in Gradient Boosting, each new tree in the ensemble aims to correct the errors of all previous trees (i.e., the existing ensemble), 
whereas in Random Forest, 
all $T$ trees are trained independently. 

This allows boosting to advance further in reducing the approximation error of the target variable. 
However, the Random Forest solution exhibits greater stability, which stems from the Law of Large Numbers, as its prediction \eqref{eq:rf_def} is an average over a large number of independent (conditional on data) decision trees. 
From \eqref{eq:rf_def} it follows that the conditional variance decreases as 
\begin{equation}
\operatorname{Var} \left[\hat{y}_{\text{RF}}(x) \mid D \right] = O \!\left(\frac{1}{T}\right), \quad T \to \infty,
\label{eq:var}
\end{equation}
and, consequently, the conditional standard deviation is of order $O(T^{-1/2})$.
This stability is particularly valuable in high-dimensional settings where the number of features $p$ is large, 
especially when $p$ significantly exceeds the number of data samples $n$ 
(Diaz-Uriarte and De Andres~\cite{Diaz-Uriarte2006Gene}, 
Goldstein et al.~\cite{Goldstein2010Application, Goldstein2011Random},
Janitza et al.~\cite{Janitza2018Computationally}).

Furthermore, the independence of the constituent trees enables the use of the so-called out-of-bag (OOB) score when bootstrap sampling is applied.
The OOB score serves as an excellent alternative to cross-validated score estimation, 
thereby avoiding the need for repeated model training on different data folds and allowing the use of a single Random Forest training run to evaluate performance on the entire training dataset.
The independence of the trees also enables parallel training across multiple CPUs, greatly speeding up the forest training.

\subsection{Feature Importance Measures}
Another powerful feature of Random Forests is the availability of variable importance measures (global VIMs), which quantify each feature’s contribution either to the model’s predictions or to its predictive performance.
A canonical example is the Mean Decrease in Impurity (MDI; also known as the Gini importance), which naturally arises from the tree-building procedure itself (Breiman~\cite{Breiman1984Classification,Breiman2001Random}, Louppe et al.~\cite{Louppe2015Understanding}).
In the context of Random Forests, MDI is often viewed as more adequate under strong feature correlations than its direct analogue in Gradient Boosting, where trees are sequentially fitted and are not exchangeable.
The Random Forest Gini VIM can be interpreted as quantifying the associative strength between each predictor and the target variable: it captures non-linear effects, naturally handles mixed feature types (binary/categorical/continuous), and reflects interactions beyond simple univariate feature--target associations.

These importance scores are widely used in feature-selection pipelines.
For Minimal Feature Selection (finding the smallest subset of predictors sufficient for accurate prediction), a classical approach is Recursive Feature Elimination (RFE, Guyon et al.~\cite{Guyon2002Gene}), which iteratively removes features with the lowest importance.
For All-Relevant Feature Selection (identifying all predictors that carry signal about the target), algorithms such as Boruta (Kursa and Rudnicki~\cite{Kursa2010Feature}) and VITA (Janitza et al.~\cite{Janitza2018Computationally}) rely on Random-Forest-based VIMs.
Beyond feature selection, related importance notions are used in systems biology for network inference (Marbach et al.~\cite{Marbach2012Wisdom}), most notably in GENIE3 (Huynh-Thu et al.~\cite{Huynh-Thu2010Inferring}), and also in clustering and grouping of features (Tolosi and Lengauer~\cite{Tolosi2011Classification}).

From a methodological standpoint, the most common routes to global VIMs can be grouped as follows:
\begin{itemize}
    \item \textbf{Tree-ensemble intrinsic measures:} impurity-based importances such as MDI (Breiman~\cite{Breiman1984Classification,Breiman2001Random}) and related analyses (Louppe et al.~\cite{Louppe2015Understanding});
    \item \textbf{Permutation-based measures:} model-agnostic importance via feature shuffling (often referred to as MDA in Random Forests; Breiman~\cite{Breiman2001Random}), with well-documented sensitivity to correlations and other biases (Strobl et al.~\cite{Strobl2007Bias}; Hooker et al.~\cite{Hooker2021Unrestricted});
    \item \textbf{Shapley-value explanations:} SHAP (Lundberg and Lee~\cite{Lundberg2017Unified}; Covert et al.~\cite{Covert2020Understanding}), which is typically local but can be aggregated into global importances; for tree ensembles, TreeSHAP makes such computations practical at scale (Lundberg et al.~\cite{Lundberg2018Consistent,Lundberg2020From});
    \item \textbf{Neural-network attribution methods:} e.g., Integrated Gradients (Sundararajan et al.~\cite{Sundararajan2017Axiomatic}) and DeepLIFT (Shrikumar et al.~\cite{Shrikumar2017Learning}).
\end{itemize}

Permutation and Shapley-value approaches are model-agnostic and can therefore be applied to Random Forests as well; however, in high-dimensional tabular settings, computational constraints (for SHAP) and correlation-induced artifacts (particularly for permutation importance) become prominent.
For neural network attributions, several works report reliability and stability problems, showing that explanations can change substantially under small perturbations or even fail basic sanity checks (Adebayo et al.~\cite{Adebayo2018Sanity}; Ghorbani et al.~\cite{Ghorbani2017Interpretation}; Sixt et al.~\cite{Sixt2018When}).
Overall, despite its classical nature, Random Forest remains a practical backbone for global VIMs on tabular data; nevertheless, feature correlations and algorithmic randomness make stability a central concern, motivating the careful choice of the ensemble size discussed next.

\subsection{Why the Proper Number of Trees Is Critical}
When the number of features $p$ is large, strong inter-feature correlations are common.
During the construction of a decision tree, the best feature is selected at each split.
Consider an extreme case where several features are virtually identical.
Then the choice among them becomes arbitrary and effectively random.
Although this randomness may have little impact on the predictive rule of the model, it directly affects which feature receives credit in the importance score.

The key parameter governing the stability of both the model's predictions and its variable-importance measures (VIMs) is the number $T$ of random trees in the ensemble.
Prediction stability is typically attained relatively quickly as $T$ increases---a phenomenon studied theoretically and empirically by Scornet~\cite{Scornet2016Asymtotics}, 
Biau and Scornet~\cite{Biau2016Random}, and 
Wager et al.~\cite{Wager2014Confidence}.
However, as the example above illustrates, stabilizing VIMs generally requires a substantially larger number of trees than stabilizing the predictive score itself (Lange et al.~\cite{Lange2025optRF}).

Therefore, determining a sufficient number of trees that ensures both strong predictive performance and reliable feature-importance estimation is an important and timely problem.
Crucially, good predictive performance is \emph{necessary} but \emph{not sufficient} for reliable VIMs: feature-importance analysis is of limited value if the underlying model is poorly tuned.
It is precisely this necessity that motivates the present work.
Both score and VIM stabilities are naturally addressed as two sequential stages.
First, one tunes the Random Forest---including selecting $T$ together with other hyperparameters---to maximize the OOB score, thereby securing predictive quality and generalization.
Afterwards, if required, one can further increase the number of trees (keeping the remaining hyperparameters fixed) until the VIM stabilizes.

In this work, we focus on the first, necessary stage: how to identify a value of $T$ that is not excessively large---so as to avoid wasting computational resources on tuning oversized ensembles---yet is sufficient for high-quality predictions.

\subsection{Challenges, limitations of Current Practices and Proposed Approach}
However, there are two important considerations. 
First, in current practice, hyperparameter tuning requires explicit specification of search ranges, regardless of the search method --- be it grid search or any variant of random search, 
such as Sequential Model-Based Optimization via the Tree-structured Parzen Estimator (TPE, Bergstra et al.~\cite{Bergstra2011Algorithms}). 
Typically, the search range for $T$ is defined as an interval $[T_{\text{min}}, T_{\text{max}}]$, where, for example, one might choose $1 \leq T_{\text{min}} \leq 100$ and $500 \leq T_{\text{max}} \leq 5000$.
When this range is defined, the value identified by HPO tends to converge toward the upper bound $T_{\max}$, because in Random Forests (and unlike Gradient Boosting) increasing the number of trees does not cause overfitting and generally improves performance. 

It should be noted, however, that the model score plateaus as $T$ increases 
(see Fig.~\ref{fig:plateau}). 
Consequently, raising the $T_{\max}$ shifts the estimate of $T$ toward that boundary, albeit at a diminishing rate. 
The fact that it does not exactly reach the bound is due to inherent randomness --- either from the stochastic search procedure itself or, in the case of deterministic grid search, from the randomness in the Random Forest algorithm and the finite data sample. 
Crucially, regardless of the chosen bound, we lack \emph{certainty} or guarantees that it is sufficiently high, nor that it is not excessive.

Current practice sometimes suggests, for example, doubling the number of trees
as long as the score difference remains significant (Oshiro et al.~\cite{Oshiro2012How}).
Although in such a stopping rule $T_{\max}$ may effectively serve only as a safety cap and can therefore be chosen sufficiently large, the procedure remains vulnerable to false early stopping. In particular, a monotone early-stopping scheme tests the stopping condition sequentially from smaller to larger ensembles, so an early random satisfaction of the criterion may bias the selected tree count downward. 
Moreover, reliably distinguishing statistically significant score differences from negligible ones would in principle require training multiple Random Forests with $T$ and $2T$ trees, which is computationally expensive and also makes the procedure dependent on the chosen statistical test and its tuning parameters. 
Although increasing the number of such replicated forests can make the decision more stable, it does not fully remove the tendency toward premature stopping and underestimation of the required tree count.

In this work, we instead define sufficiency of $T$ through the relative score difference between models with $T$ and $T\cdot \textnormal{sf}$ trees, where the scale factor $\textnormal{sf}>1$ (e.g., $\textnormal{sf}=2$). 
This embeds the plateau principle directly into the criterion itself, rather than using it only as a heuristic guideline. Instead of repeatedly training many Random Forests for each candidate value of $T$, we exploit the variability naturally accumulated across HPO trials. 
This reduces the per-trial computational burden and mitigates the sensitivity
and downward-bias problems associated with simple monotone early-stopping schemes.

Another important point is that the search for a sufficient $T$ is \emph{interdependent} with other hyperparameters, such as tree depth, the number of randomly selected candidate features at each split (often called \param{mtry} or \param{max\_features}), the minimum number of samples in a leaf node, the minimum number of samples required to split a node, and others.
Therefore, simply doubling $T$ is not entirely appropriate when other hyperparameters are fixed, especially if $T$ is still small. 
For instance, if $T$ is set too low, or if the lower bound of the search range is too restrictive, the optimal estimated tree depth may increase to compensate for the limited approximation capacity of the ensemble. 

This appears somewhat paradoxical: reducing $T$ limits the Random Forest’s ability to reduce variance and mitigate overfitting, whereas increasing tree depth also increases variance and provides no compensation.
Hence, the optimum is not found in individual hyperparameters but in the entire set jointly. 
Despite sustained attention to selecting the number of trees and recognition of the problem caused by the absence of an interior optimum, insufficient attention has been paid to the interdependence among all hyperparameters.
Our proposed solution addresses this aspect.
We experimentally demonstrate that tuning hyperparameters jointly yields a significant improvement in the predictive score.

Our algorithm finds the optimal hyperparameter set, including $T$, in a single iterative TPE process using the Optuna framework (Akiba et al.~\cite{Akiba2019Optuna}). 
$T$ is excluded from the set of hyperparameters sampled in a Bayesian manner within predefined ranges. 
Instead, the number of trees is adapted based on the degree of plateau attainment. 
To assess this, we introduce a triplet of $T$ values differing by a specified scale factor and compute the relative error between successive OOB scores. 
Depending on this error, the working number of trees can be adjusted upward or downward at each trial of the Optuna TPE procedure. 
This yields an estimate of $T$ with a prescribed tolerance, e.g., $\varepsilon = 10^{-3}$. 
Unlike choosing an arbitrary upper bound $T_\text{max}$, selecting this tolerance parameter is intuitive and user-friendly.
\subsection{Overview of Contributions and Paper Structure}
This work is structured as follows. 
Section~\ref{sec:related_work} reviews existing approaches to varying and estimating the number of trees in Random Forest as well as algorithms for hyperparameter tuning. 
Section~\ref{sec:algorithm} introduces our core contribution: the plateau-search algorithm with its triplet-based adaptive mechanism, integrated within the Optuna TPE sampler.
It also provides a theoretical analysis of the proposed criterion: it establishes a link between the relative OOB-score plateau condition and the gap to the limiting score, derives asymptotic expressions for the conditional variance of both the signed and absolute plateau statistics, discusses key features of the algorithm, and addresses practical selection of the tolerance parameter, including its natural quantization scale for common classification metrics.
The implementation details of the accompanying Python library are also provided. 
Section~\ref{sec:results} presents a comprehensive experimental evaluation on benchmark datasets, comparing the proposed method against classical TPE- and Hyperband-based tuning and analyzing the impact of key parameters. 
Finally, Section~\ref{sec:conclusion} summarizes our findings, discusses the implications for efficient Random Forest tuning, and outlines directions for future work.
\section{Related work}
\label{sec:related_work}
\subsection{Trying different numbers of trees}
\begin{table*}[t]
\centering
\caption{Representative works related to choosing the number of trees $T$ in Random Forests.}
\label{tab:trees_related}
\setlength{\tabcolsep}{0.7cm}
\begin{tabular}{lll}
\toprule
\textbf{Study} &
\textbf{Result type} &
\textbf{Compared settings} \\
\midrule
Genuer et al.\ (2008) &
Empirical guidance &
Score on a fixed grid of $T$ \\
Oshiro et al.\ (2012) &
Empirical guidance &
Score on a geometric grid of $T$ \\
Latinne et al.\ (2001) &
Stopping criterion &
Score at $T$ vs.\ score at $T'>T$ \\
Hern\'andez-Lobato et al.\ (2013) &
Theoretical estimate &
Predictions at $T$ vs.\ $T'=\infty$ \\
Lange et al.\ (2025) &
Stopping criterion &
Reproducibility stability at $T$ vs.\ $T'>T$ \\
Wager et al.\ (2014) &
Analytical note &
Prediction variance vs.\ $T$ \\
Lopes (2016) & 
Analytical note &
Score variance vs.\ $T$, upper bound  \\
Lopes (2019) &
Analytical note &
Prediction variance vs.\ $T$, score at $T$ vs.\ $T'=\infty$  \\
Arlot and Genuer (2014) &
Theoretical estimate &
Prediction bias at $T$ vs.\ $T'=\infty$ (PRF) \\
Probst and Boulesteix (2018) &
Analytical note &
Score vs.\ $T$ \\
Cuzzocrea et al.\ (2013) &
Heuristic mapping &
$T$ vs.\ dataset ``predictive power'' \\
Demidova and Ivkina (2020) &
Empirical estimate &
$[T_{\min},T_{\max}]$ vs.\ $n$ \\
\bottomrule
\end{tabular}
\end{table*}
The challenge of selecting the number of trees in Random Forest can be traced to its early documentation. 
Breiman and Cutler~\cite{Breiman2003Manual} note an empirical rule of thumb: 
``Sometimes, I run out to 5000 trees if there are many variables and I want the variable importances to be stable.''
This statement highlights the inherently empirical and computationally intensive nature of the approach at the time, 
where the primary strategy was to increase the number of trees to a ``sufficiently large'' value to ensure stability.
Subsequent studies, including those by 
Strobl and Zeileis~\cite{Strobl2008Danger}, 
Genuer et al.~\cite{Genuer2010Variable}, and 
Kursa and Rudnicki~\cite{Kursa2010Feature}, 
similarly advocated for using as many trees as computationally feasible to achieve reliable results.

At the same time, this approach offers no clear criterion for selecting the number of trees $T$: ``The number of trees in a forest is a parameter that is not tunable in the classical sense but should be set sufficiently high'' (Probst et al.~\cite{Probst2019Hyperparameters}).
The influence of the number of trees $T$ on Random Forest performance has been empirically explored in several studies, including those by Genuer et al.~\cite{Genuer2008Random}, Oshiro et al.~\cite{Oshiro2012How}, and Cuzzocrea et al.~\cite{Cuzzocrea2013Information} (Table \ref{tab:trees_related}).
These works confirm that the model's score plateaus as $T$ increases, meaning that beyond a certain point, further improvements in quality become negligible.
Oshiro et al.\ further report that the required number of trees grows with the number of features $p$.

Oshiro et al.'s work ``How Many Trees in a Random Forest?''  considers the number of trees as a power of two, $T = 2^j$ for $j = 1, 2, \dots, 12$, i.e., up to 4096 trees. 
The dependence of the required $T$ on $n$, $p$, and the number of classes in multiclass classification is justified through the Vapnik–Chervonenkis (VC) dimension. In their study, they introduce the notion of dataset density, defined as $D = \log_p n$, and empirically demonstrate that a larger number of trees is required for low-density datasets, where $n \ll p$. 
The lowest-density dataset in their experiments has
$n = 60$, $p = 7129$, and $D = 0.46$. 
This study motivates our choice of a geometric grid for the number of trees in the proposed algorithm.

Similarly, Genuer et al.~\cite{Genuer2008Random} extensively experiment with Random Forest parameters. 
They confirm the performance plateau for  $T$ (testing values like $500, 1000, \dots, 5000$) and crucially show that for high-dimensional datasets ($n \ll p$), increasing the number of candidate features per split (\param{mtry}) beyond the default value of $\sqrt{p}$ often yields better performance. 
This insight directly points to the interdependence of hyperparameters and the need for their joint optimization, which is the core problem addressed in our work.

Cuzzocrea et al.~\cite{Cuzzocrea2013Information} study empirically how increasing the number of trees affects accuracy across multiple datasets, and relate the observed gains (e.g., comparing 10 vs.\ 500 trees) to information-theoretic measures of dataset ``predictive power''.

The work of Probst and Boulesteix~\cite{Probst2018To} is a fundamental study that analytically demonstrates that the expected classification error rate can be a non-monotonic function of the number of trees in a Random Forest when the data contain observations $i$ for which the error probability $\varepsilon_i$ of a single tree is close to or exceeds $0.5$. 
In particular, the area under the Receiver Operating Characteristic curve (ROC-AUC) may exhibit non-monotonic behavior due to changes in the ranking order of observations as $T$ increases. 
This generally contradicts the principle ``the more trees, the better.''
In contrast, metrics such as the Brier score, logarithmic loss, and mean squared error are guaranteed to improve monotonically with increasing $T$. 
However, in our experiments on well-known classification datasets, such non-monotonic behavior of ROC-AUC was not observed.
\subsection{Obtaining the number of trees}
Methods for estimating the optimal number of estimators in an ensemble have been proposed, starting with the early work of Latinne et al.~\cite{Latinne2001Limiting}, followed by Hern{\'a}ndez-Lobato et al.~\cite{Hernandez2013How}, and most recently by Lange et al.~\cite{Lange2025optRF}.
We highlight several conceptual similarities and differences between these approaches.
Below we contrast them in terms of what is compared, what notion of stability is targeted, which statistical decision rule is used, and how practical design choices affect the resulting ranges of recommended tree counts.

In addition to these three representative approaches, several complementary lines of work are worth noting.
Wager et al.~\cite{Wager2014Confidence} and Lopes~\cite{Lopes2016Sharp, Lopes2019Estimating} study finite-ensemble (Monte Carlo) effects consistent with the $O(T^{-1/2})$ scaling in \eqref{eq:var}, and discuss how large $T$ should be to make this component negligible.
Arlot and Genuer~\cite{Arlot2014Analysis} considered \emph{purely random forests}, an abstract model in which the partitioning mechanism is independent of the dataset, and analyzed how the approximation bias of a finite forest approaches that of the infinite-forest limit. 
They showed that, as tree complexity increases (e.g., with the number of leaves or the depth), the bias of the infinite forest decreases faster than the bias of a single tree, and derived a sufficient ensemble size for a finite forest to attain the same bias rate as the infinite one.
In some of their models, 
this sufficient size also depends on the input dimension $p$.
Demidova and Ivkina~\cite{Demidova2020Approach} fit dataset-size-driven bounds for the tree count, yielding logarithmic-type rules of the form $T \approx a\log(n)+b$.
Cuzzocrea et al.~\cite{Cuzzocrea2013Information} further outline an information-theoretic heuristic for deriving $T$ from dataset characteristics (see above).

Latinne et al., Hern{\'a}ndez-Lobato et al., and Lange et al.\ share a common underlying idea: to find a number of trees $T$ such that the differences between models of size $T$ and a larger size $T' > T$ become negligible.
A specific technical similarity is that both Latinne et al.\ and Lange et al.\ use an additive increment of $T' - T = 10$ trees in their criterion.
However, the approaches differ in what is being compared. 
One can either compare predictive performance with respect to ground truth---e.g., via a paired test on prediction correctness, as in Latinne et al.\ --- or quantify the agreement (\emph{stability}) of model outputs, as in Hern\'andez-Lobato et al.\ and Lange et al.
In this sense, our proposed method aligns with the first type, 
as we compare the relative change of the OOB score.

Regarding the notion of stability, for clarity we use the following informal terminology to distinguish two notions of ``stability'':
\begin{enumerate}
    \item \textbf{Convergence Stability}: The difference between predictive models with $T$ and $T'$ trees (Latinne et al.; Hern{\'a}ndez-Lobato et al.).
    \item \textbf{Reproducibility Stability}: The variation between models trained with a fixed $T$ but different random seeds, capturing instability due to the algorithm's inherent randomness and limited data (Wager et al.; Lopes; Lange et al.).
\end{enumerate}
More precisely, Lange et al.\ model how reproducibility stability converges as $T$ increases.

These two notions are related: convergence stability reflects a component of reproducibility stability, because differences between models with different tree counts arise not only from systematic bias but also from variance at fixed $T$. 
The relative difference in OOB scores used in our plateau criterion naturally captures this stochasticity. 
By requiring the relative change in the OOB score to fall below $\varepsilon$, we stop increasing $T$ once additional trees yield no meaningful improvement. This accounts for both bias reduction and finite-sample variability.
Hence, OOB randomness is a feature rather than a drawback: the plateau condition tracks convergence and accounts for reproducibility-related fluctuations.

From the standpoint of statistical decision-making, the methods employ distinct frameworks.
Latinne et al.\ base their decision on the McNemar test, which compares the numbers of instances misclassified by ensemble but not the other when contrasting two ensembles of different sizes.
Hern\'andez-Lobato et al.\ propose a method for estimating the minimum ensemble size $T$ such that the predictions of a finite ensemble coincide, on average,
with probability at least $\alpha$ (close to 1), with those of a hypothetical infinite ensemble ($T \to \infty$).
Lange et al.\ distinguish three types of stability: prediction stability, variable importance stability, and selection stability (e.g., for selecting the best individuals or the most important features). 

Hern{\'a}ndez-Lobato et al.\ provide an asymptotic approximation:
\begin{equation}
T(\alpha) \approx \left( \frac{f(1/2) \cdot C}{1 - \alpha} \right)^2,
\quad C = \int_{-\infty}^{0} \Phi(z) \, dz
=\frac{1}{\sqrt{2\pi}},
\label{eq:T_alpha}
\end{equation}
where $\Phi(z)$ is the cumulative distribution function of a standard Gaussian distribution.
The key component $ f(\pi_1) $ is the probability density function of the random variable $\pi_1(x)$ --- the probability that a single base classifier predicts the positive class for an input feature vector $x$.
The density $f(\pi_1) $ is estimated empirically from an ensemble of base classifiers trained on the training set, with $\pi_1(x)$ estimated from their outputs on validation/test instances or via OOB predictions.
For a set of held-out or OOB data points, $\pi_1(x) $ for each point is approximated by the fraction of base classifiers predicting the positive class. 
A kernel density estimator or a histogram is then used to obtain $\hat{f}(\pi_1)$,
from which the critical value $\hat{f}(1/2)$ is extracted.

The value $f(1/2)$ quantifies the concentration of ``borderline'' instances (where $ \pi_1(x) \approx 1/2 $), which are the most uncertain and dictate the convergence rate of the ensemble.
This theoretical result provides a principled guideline: the required ensemble size scales with the square of the density of hard examples and inversely with the square of the acceptable uncertainty $1-\alpha$.
However, the analysis is subject to two main assumptions: (1) the problem is binary classification with majority voting, and (2) the normal approximation (via the Central Limit Theorem) is valid for large $T$.
From \eqref{eq:T_alpha} it follows that $1-\alpha = O\!\left(T^{-1/2}\right)$ as $T\to\infty$,
i.e., the uncertainty level in Hern\'andez-Lobato et al.\ decreases at the same rate as the conditional standard deviation of the Random Forest output in \eqref{eq:var}.

Lopes~\cite{Lopes2016Sharp} pursued a related direction for binary classification and studied the \emph{algorithmic} variance (i.e., due only to the randomized training algorithm with fixed training set $D$) of the ensemble misclassification error $ERR_T$.
This quantity can be viewed as a class-conditional population counterpart of $1-\mathrm{accuracy}$.
In particular, under the setting considered there, the author derived a first-order asymptotic expansion for the conditional mean ensemble error of the form
(see~\cite{Lopes2016Sharp}, Lemma~1, formula~(7))
\begin{equation}
\mathbb{E}[\mathrm{ERR}_T\mid D]
=
\mathrm{ERR}_\infty + \frac{f'(1/2)}{8T}+o(T^{-1}),
\quad T\to\infty,
\label{eq:lopes_mean}
\end{equation}
and also obtained a sharp bound for the variance
\begin{equation}
\mathrm{Var}\!\left[ERR_T \mid D\right]
\le
\frac{f(1/2)^2}{4T}
+
o(T^{-1}),
\qquad T\to\infty,
\label{eq:lopes_bound}
\end{equation}
which implies that the corresponding algorithmic standard deviation decays as $O(T^{-1/2})$.

In the same line of work, Lopes considers relative stopping conditions of the form
\begin{equation*}
\sqrt{\mathrm{Var}\!\left[ERR_T \mid D\right]} \le \varepsilon\, ERR_T,
\end{equation*}
which may be viewed as a coefficient-of-variation-type criterion: the scale of the remaining algorithmic fluctuations is required to be small relative to the error level itself. In this sense, our plateau criterion is similar in spirit, since it also monitors whether the remaining score fluctuations are small in relative terms, but does so directly through successive changes of the chosen score.

Lange et al. measure stability as the agreement among 10 independently trained Random Forests (each run with a different random seed). 
Depending on the task, the compared quantities are the predicted values (for prediction stability), feature  importance (for VIM stability), or the top 15 percent of individuals or top 5 percent of features for selection stability. 
The agreement is quantified using the Intraclass Correlation Coefficient (ICC) for regression and variable importance, and Fleiss' $\kappa$ for classification and selection stability. 
Both ICC and Fleiss' $\kappa$ typically lie in $[0,1]$, with $1$ indicating perfect agreement (negative values may occur, indicating agreement worse than chance).
Stability is calculated at five fixed numbers of trees: $T = 250, 500, 750, 1000$, and $2000$, 
which serve as anchor points (knots) for subsequent extrapolation. 
The relationship between the number of trees and stability is then modeled using a logistic-type function
\begin{equation}
    \hat{s} = \frac{1}{1+\left(\frac{\theta_1}{T}\right)^{\theta_2}},
    \label{eq:logistic}
\end{equation}
where the parameters $\theta_1$ and $\theta_2$
are fitted to these points via nonlinear least squares. 
This fit serves as an empirical saturation curve; its asymptotic rate is model-driven and need not coincide with the $O(T^{-1/2})$ scaling.
The equation \eqref{eq:logistic} is used to extrapolate stability for much larger values of $T$. 
The optimal number of trees is defined as the point where increasing $T$ by 10 trees improves the stability by no more than $10^{-6}$.

Hernández-Lobato et al.\ considered traditional, i.e. high-density datasets $n \gg p$, and for the confidence level $\alpha=0.99$, the estimated ensemble sizes vary widely across tasks, ranging from tens to thousands of trees.
Lange et al.\ focused on datasets with $n \ll p$ from bioinformatics, where the number of features reached up to $p = 139{,}101$ with $n = 1{,}063$ observations. 
The estimated optimal number of trees reached up to $T = 708{,}000$ for selection stability in one dataset.
It should be noted that although the stability measures used (ICC and Fleiss’ $\kappa$) are statistically sound and widely accepted, they are not intuitive for users and are difficult to interpret.

Moreover, the method depends on several predefined parameters, such as using 10 repeated forests, thresholds of 15 and 5 percent in the selection stability, and 10 trees added to the current ensemble. 
The criterion that the stability gain per 10 additional trees does not exceed a small tolerance ($10^{-6}$) does not guarantee stability values close to 1; 
for instance, a stability of only $0.845$ was achieved with $T = 137{,}000$ trees in one dataset.
Furthermore, for such non-intuitive stability metrics, it is not clear how to interpret in practice whether increasing $T$ from, say, $0.845$ to $0.9$ or to $0.99$ is really necessary.

Our relative OOB-score criterion shares a similar limitation: satisfying a tolerance $\varepsilon$ does not by itself guarantee closeness of the current score to its limiting value, and convergence to that limit may remain slow. 
However, below we derive a theoretical link between the relative OOB-score difference and the gap to the limiting score. 
Moreover, because the criterion is formulated in relative units, the tolerance $\varepsilon$ remains directly interpretable in practice as an approximate indicator of the number of significant digits of the Random Forest score.

Additionally, the approaches of Latinne et al., Hern{\'a}ndez-Lobato et al., and Lange et al. do not address the joint tuning of the number of trees together with other critical Random Forest hyperparameters, such as \param{mtry} or tree depth (Bernard et al.~\cite{Bernard2009Influence}, Scornet~\cite{Scornet2017Tuning}).
Our work does not aim to replace such ensemble‑stabilization methods, as we do not focus on stabilizing VIMs or feature/sample selection. 
Instead, we address the primary tuning of Random Forest, after which model stability depends primarily on the number of trees, which can be increased further to meet additional VIM stability criteria.
\subsection{Hyperparameter optimization approaches}
Hyperparameter optimization (HPO) encompasses several fundamental approaches that are well-documented in major surveys and books:
Feurer and Hutter~\cite{Feurer2019Hyperparameter}, 
Bischl et al.~\cite{Bischl2023Hyperparameter},
Bartz et al.~\cite{bartz2023hyperparameter}.
The main HPO strategies can be categorized as follows:
\begin{itemize}
    \item \textbf{Exhaustive search:} Grid search and its variants.
    \item \textbf{Stochastic search:} Random search and its derivatives.
    \item \textbf{Sequential model-based optimization (SMBO):} 
        \begin{itemize}
            \item Bayesian optimization with Gaussian processes (e.g., Spearmint, SMAC).
            \item Tree-structured Parzen estimator (TPE) (e.g., Hyperopt, Optuna).
        \end{itemize}
    \item \textbf{Multi-fidelity methods:} Techniques that use low-fidelity approximations to speed up search, such as Hyperband.
\end{itemize}

The most basic distinction lies between exhaustive grid search and stochastic random search (Bergstra and Bengio~\cite{Bergstra2012Random}). 
Grid search performs a complete, non-stochastic enumeration of all hyperparameter combinations within a predefined discrete grid, requiring the user to specify exact ranges and discretization steps for each parameter. 
A fundamental limitation of this approach is its inability to adapt search density based on parameter importance --- a property known as \emph{tunability} 
(Probst et al.~\cite{Probst2019Tunability}).
Consequently, grid search often allocates excessive computational resources to less influential parameters or regions of the search space and may miss optimal values of critical parameters that lie between grid points.
This inefficiency is compounded by the practical reality that users rarely possess sufficient prior knowledge to design optimal grids.

Random search addresses this by sampling configurations independently from specified distributions, thereby exploring the parameter space more flexibly. 
Each parameter can be sampled across its entire continuous range, increasing the probability of finding near-optimal regions for important parameters. 
The approach also enables trivial parallelization. 
However, it remains an undirected strategy, as each trial is independent and does not leverage information from previous evaluations. 
It also ignores potential relationships between parameters --- for instance, between tree depth and minimum samples per leaf --- meaning that many sampled combinations may be inherently suboptimal.

This limitation is overcome by Sequential Model-Based Optimization methods, which perform iterative, informed search. 
These algorithms generate new hyperparameter configurations based on the performance history of previous trials, creating a directed process that converges toward promising regions of the search space. 
SMBO methods, such as the Tree-structured Parzen Estimator (Bergstra et al.~\cite{Bergstra2011Algorithms}), adaptively construct probabilistic surrogate models to guide sampling, effectively avoiding exhaustive exploration of low-performance areas.

Gaussian process-based SMBO, as implemented in 
Spearmint (Snoek et al.~\cite{Snoek2012Practical}) 
and SMAC (Hutter et al.~\cite{Hutter2011Sequential}), 
represents another prominent approach. 
Although GP-based methods provide better uncertainty quantification and theoretical guarantees in low-dimensional, continuous parameter spaces (typically up to 10--20 hyperparameters), they scale poorly to higher-dimensional spaces in terms of computational complexity and have difficulty handling discrete and categorical parameters.
For tuning Random Forest --- which involves mixed parameter 
types including discrete (\param{max\_depth}) and continuous 
variables --- TPE has proven particularly effective 
in practice due to its robustness and scalability.

An important advancement in HPO is trial pruning---the early termination of unpromising configurations based on intermediate performance metrics.
Techniques such as Hyperband (Li et al.~\cite{Li2018Hyperband})
and Bayesian Optimization and Hyperband (BOHB, Falkner et al.~\cite{Falkner2018BOHB})
implement aggressive pruning schedules.
This can accelerate optimization compared to plain TPE, but may also discard configurations that would perform well only at larger budgets.
Moreover, these algorithms do not resolve the uncertainty in choosing $T_{\max}$, and their main purpose is computational acceleration rather than improving the accuracy and stability of the resulting hyperparameter estimates.

In contrast, we use pruning in a different sense: not primarily as a mechanism for computational acceleration, but to exclude from the final comparison those trials for which a plateau has not been confirmed.
Thus, in our method pruning acts mainly as a quality filter, ensuring that the best-trial selection is performed only among configurations that satisfy the plateau criterion.
We use a BOHB-like baseline among the baselines to compare computational cost and the extent of pruning.

Among contemporary HPO frameworks, two widely used TPE-based tools are
Hyperopt (Bergstra et al.~\cite{Bergstra2015Hyperopt}) and
Optuna (Akiba et al.~\cite{Akiba2019Optuna}).
We selected Optuna for its native pruning support and straightforward API, which facilitated the implementation of our custom pruning logic based on plateau detection.
Although both frameworks can support our algorithm, Optuna’s design made the integration more convenient in practice.

\section{Triplet-based plateau search algorithm}
\label{sec:algorithm}
\subsection{Proposed Method}
\begin{figure*}[t]
\centering
\begin{tikzpicture}[
    point/.style={circle, inner sep=2pt},
    gap/.style={decorate, thick},
    dimension/.style={{Stealth[length=5pt]}-{Stealth[length=5pt]}, thin, black},
    dimension_in/.style={{}, thin, black},
    dimension_out/.style={-{Stealth[length=5pt]}, thin, black},
    extension/.style={shorten <=-2pt, shorten >=-2pt, thin, black}
]
\begin{axis}[
    width=\textwidth,
    height=0.4\textwidth,
    xlabel={$T$},
    ylabel={Score},
    axis lines=left,
    axis line style={-Stealth, thick},
    enlargelimits=true,
    xtick=\empty,
    ytick=\empty,
    xmin=0, xmax=100,
    ymin=0.5, ymax=1.01,
    domain=0:100,
    samples=200
]
\addplot[thick, blue] {0.5 + 0.5*(1 - exp(-0.08*x))};
\draw[dashed, very thick, gray] (axis cs:0,1) -- (axis cs:100,1);
\node[above, font=\small] at (axis cs:5,1.01) {$S_{\infty}$};
\def\xLl{3}
\def\xBl{6}
\def\xRl{11}
\pgfmathsetmacro{\yLl}{0.5 + 0.5*(1 - exp(-0.08*\xLl))}
\pgfmathsetmacro{\yBl}{0.5 + 0.5*(1 - exp(-0.08*\xBl))}
\pgfmathsetmacro{\yRl}{0.5 + 0.5*(1 - exp(-0.08*\xRl))}
\draw[extension] (axis cs:\xLl,\yLl) -- (axis cs:\xLl-1.5,\yLl);
\draw[extension] (axis cs:\xBl,\yBl) -- (axis cs:\xLl-1.5,\yBl);
\draw[dimension] (axis cs:\xLl-1.5,\yLl) -- (axis cs:\xLl-1.5,\yBl)
    node[midway, left=1] {$\text{plat}_L > \varepsilon$};
\draw[extension] (axis cs:\xRl,\yRl) -- (axis cs:\xRl+1,\yRl);
\draw[extension] (axis cs:\xBl,\yBl) -- (axis cs:\xRl+1,\yBl);
\draw[dimension] (axis cs:\xRl+1,\yRl) -- (axis cs:\xRl+1,\yBl)
    node[midway, right=1] {$\text{plat}_R > \varepsilon$};
%
\node[point, label=below right:L, fill=red] (L1) at (axis cs:\xLl,\yLl) {};
\node[point, label=below right:B, fill=red] (B1) at (axis cs:\xBl,\yBl) {};
\node[point, label=right:\>R, fill=red] (R1) at (axis cs:\xRl,\yRl) {};
\node[align=center] at (axis cs:15, 0.6) {shift right};
%
\def\xL{30}
\def\xB{40}
\def\xR{53}
\pgfmathsetmacro{\xBR}{0.5*\xB + 0.5*\xR}
\pgfmathsetmacro{\yL}{0.5 + 0.5*(1 - exp(-0.08*\xL))}
\pgfmathsetmacro{\yB}{0.5 + 0.5*(1 - exp(-0.08*\xB))}
\pgfmathsetmacro{\yR}{0.5 + 0.5*(1 - exp(-0.08*\xR))}
\draw[extension] (axis cs:\xL,\yL) -- (axis cs:\xL-2,\yL);
\draw[extension] (axis cs:\xB,\yB) -- (axis cs:\xL-2,\yB);
\draw[dimension_in] (axis cs:\xL-1.5,\yL) -- (axis cs:\xL-1.5,\yB)
    node[midway, left=1.5] {$\text{plat}_L > \varepsilon$};
\draw[dimension_out] (axis cs:\xL-1.5,\yL-0.08) -- (axis cs:\xL-1.5,\yL);
\draw[dimension_out] (axis cs:\xL-1.5,\yB+0.07) -- (axis cs:\xL-1.5,\yB);
\draw[extension] (axis cs:\xB,\yB) -- (axis cs:\xBR+1,\yB);
\draw[extension] (axis cs:\xR,\yR) -- (axis cs:\xBR-1,\yR);
\draw[dimension_in] (axis cs:\xBR,\yB) -- (axis cs:\xBR,\yR);
\draw[dimension_out] (axis cs:\xBR, \yB-0.07) -- (axis cs:\xBR,\yB);
\draw[dimension_out] (axis cs:\xBR, \yR+0.1) -- (axis cs:\xBR,\yR)
    node[midway, right=1.5] {$\text{plat}_R \leq \varepsilon$};
%
\node[point, label=below:L, fill=red] (L2) at (axis cs:\xL,\yL) {};
\node[point, label=below:B, fill=green] (B2) at (axis cs:\xB,\yB) {};
\node[point, label=below:R, fill=green] (R2) at (axis cs:\xR,\yR) {};
\node[align=center] at (axis cs:41,0.86) {stay};
%
\def\xLr{70}
\def\xBr{81}
\def\xRr{95}
\pgfmathsetmacro{\xLBr}{0.5*\xLr + 0.5*\xBr}
\pgfmathsetmacro{\xBRr}{0.5*\xBr + 0.5*\xRr}
\pgfmathsetmacro{\yLr}{0.5 + 0.5*(1 - exp(-0.08*\xLr))}
\pgfmathsetmacro{\yBr}{0.5 + 0.5*(1 - exp(-0.08*\xBr))}
\pgfmathsetmacro{\yRr}{0.5 + 0.5*(1 - exp(-0.08*\xRr))}
\node[point, label=below:L, fill=green] (L3) at (axis cs:\xLr,\yLr) {};
\node[point, label=below:B, fill=green] (B3) at (axis cs:\xBr,\yBr) {};
\node[point, label=below:R, fill=green] (R3) at (axis cs:\xRr,\yRr) {};
\node[point, label=above:$\text{plat}_L \leq \varepsilon$] () at (axis cs:\xLBr,\yBr) {};
\node[point, label=above:$\text{plat}_R \leq \varepsilon$] () at (axis cs:\xBRr,\yBr) {};
\node[align=center] at (axis cs:81,0.88) {shift left};
\end{axis}
\end{tikzpicture}
\caption{Three scenarios in plateau search for tree count in Random Forest hyperparameter optimization (limiting score $S_{\infty}$ --- plateau level): 
shift right --- early stage, significant improvements, 
stay --- transition to plateau, diminishing returns, and 
shift left --- plateau region, minimal improvements. 
Points $L$, $B$, $R$ represent consecutive evaluations with vertical gaps $\text{plat}_L$ and $\text{plat}_R$.}
\label{fig:plateau}
\end{figure*}
\begin{algorithm}[t]
\SetAlgoLined
\SetKwComment{tcp}{\%}{}
\KwIn{Dataset $(X, y)$, scale factor $\textnormal{sf} > 1$, tolerance $\varepsilon > 0$,
maximum trials $n_\text{trials}$, initial tree count $T_0$,
other hyperparameter search spaces;}
\KwOut{Optimal hyperparameters $\text{Params}_\text{best}$, near-optimal tree count $T = B_\text{best}$;}
\renewcommand{\thealgocf}{}
\caption{Triplet-Based OOB Plateau Search for Random Forest Hyperparameter Tuning}
\label{alg:tune_rf_oob_plateau}
\textbf{Initialization:} $[L, B, R] \gets [T_0/\textnormal{sf}, T_0, T_0 \cdot \textnormal{sf}]$\;
\For(\tcp*[f]{optimization phase}){$k \gets 1$ \textbf{\upshape to} $n_\textnormal{trials}$}{ 
    Sample $\text{Params}_k$ via Optuna's Bayesian TPE\;
    Fit RF on $(X, y)$ with $\text{Params}_k$ at tree counts $L, B, R$ and obtain OOB scores $S_L, S_B, S_R$\;
    $\text{plat}_L \gets |S_L-S_B|/|S_B|$\;
    $\text{plat}_R \gets |S_R-S_B|/|S_B|$\;
    \If(){$\textnormal{plat}_R \leq \varepsilon$ \textbf{\upshape and} $\texttt{\upshape SHOULD\_PRUNE}(S_R)$}{ 
        $\texttt{PRUNE}(k)$; \label{prune_classic} \tcp*[f]{prune trial} \\
        \textbf{continue}\;
    }
    \If(\tcp*[f]{plateau reached}){$\textnormal{plat}_R \leq \varepsilon$}{
        $\text{Score}_k \gets S_B$; \tcp*[f]{register trial} \\
        $[L_k, B_k, R_k] \gets [L, B, R]$; \tcp*[f]{save triplet} \\
        \If(\tcp*[f]{both sides plateau}){$\textnormal{plat}_L \leq \varepsilon$}{
            $[L, B, R] \gets [L/\textnormal{sf}, L, B]$; \tcp*[f]{shift left} \\
        }
        $[L_k^\text{next}, B_k^\text{next}, R_k^\text{next}] \gets [L, B, R]$; \tcp*[h]{save next}\\
    }
    \Else{
        $[L, B, R] \gets [B, R, R \cdot \textnormal{sf}]$; \tcp*[f]{shift right}\\
        $\texttt{PRUNE}(k)$ \label{prune_plateau} \;
        \textbf{continue}\;
    }
}
\If(\tcp*[f]{$\nexists$ good trial}){$\textnormal{Score}_k$ {\upshape is undefined} $\forall k$}{
    \Return $\text{Params}_{\text{best}}\gets NULL$, $B_\text{best} \gets NULL$\;
}
$k_\text{best} \gets \arg\mathrm{best}\ \text{Score}_k$; \tcp*[f]{get best trial}\\
$\text{Params}_{\text{best}} \gets \text{Params}_{k_\text{best}}$, $B_\text{best} \gets B_{k_\text{best}}$\;
\hrulefill\\
\textbf{Initialization:} $[L, B, R] \gets [L_{k_\text{best}}^\text{next}, B_{k_\text{best}}^\text{next}, R_{k_\text{best}}^\text{next}]$\;
\For(\tcp*[f]{revisit phase}){$k \gets 1$ \textbf{\upshape to} $n_\textnormal{trials}$}{
    Fit RF on $(X, y)$ with $\text{Params}_\text{best}$ at tree counts $L, B, R$ and obtain OOB scores $S_L, S_B, S_R$\;  
    $\text{plat}_L \gets |S_L-S_B|/|S_B|$\;
    $\text{plat}_R \gets |S_R-S_B|/|S_B|$\;
    \If(\tcp*[f]{still right plateau}){$\textnormal{plat}_R \leq \varepsilon $}{ 
        \If(\tcp*[f]{can reduce further}){$\textnormal{plat}_L \leq \varepsilon $}{
            $B_\text{best} \gets B$\;
            $[L, B, R] \gets [L/\textnormal{sf}, L, B]$; \tcp*[f]{shift left}\\
        }
        \Else(\tcp*[f]{found minimal tree count}){
            \Return $\text{Params}_\text{best}$, $B$\;
        }
    }
    \Else(\tcp*[f]{next plateau lost}){ 
        \Return $\text{Params}_\text{best}$, $B_\text{best}$\;
    }
}
\Return{$\textnormal{Params}_\textnormal{best}$, $B_\textnormal{best}$\;}
\end{algorithm}
Conventional HPO tunes all hyperparameters jointly. 
To effectively incorporate the number of trees --- which is intrinsically linked to the stochastic sampling process of Random Forest---an iterative, model-based approach is especially suitable. 
As other hyperparameters converge toward optimal values through successive trials, the adjusted tree count from previous iterations increasingly represents improved configurations, causing it to stabilize near its optimum. 
Our method implements this within Optuna TPE sampler, maintaining a fixed search space for standard parameters and dynamically updating the number of trees based on a plateau detection criterion.

The core idea of our algorithm is to perform, within each TPE trial, three Random Forest trainings with different numbers of trees. 
We define a triplet of tree counts related by a fixed multiplicative scaling factor $\textnormal{sf} > 1$: 
$L = B / \textnormal{sf}$, $B$, and $R = B \cdot \textnormal{sf}$. 
Random Forest training can be performed in \param{warm\_start}=``True'' mode, meaning each subsequent ensemble does not start from scratch but incrementally adds the required number of trees to the previous model.
After training each ensemble, we compute its OOB score on the full dataset: $S_L$, $S_B$, and $S_R$.

We then evaluate plateau conditions for consecutive points by computing the relative differences:
\begin{equation}
    \text{plat}_L = \left|\frac{S_B-S_L}{S_B}\right|, \qquad
    \text{plat}_R = \left|\frac{S_R-S_B}{S_B}\right|.
    \label{plat_L_R}
\end{equation}
Based on these conditions, we distinguish three primary scenarios for the current tree count $B$ (Fig.~\ref{fig:plateau}):
\begin{enumerate}
    \item $B$ is insufficient: $\text{plat}_L > \varepsilon$, $\text{plat}_R > \varepsilon$,
    \item $B$ is sufficient: $\text{plat}_L > \varepsilon$, $\text{plat}_R \leq \varepsilon$,
    \item $B$ is excessive: $\text{plat}_L \leq \varepsilon$, $\text{plat}_R \leq \varepsilon$,
\end{enumerate}
where $\varepsilon$ is a predefined tolerance. 
These conditions indicate whether the OOB score has stabilized (reached a plateau) to the left or right of $B$.

The algorithm adjusts the triplet according to the scenario:
\begin{itemize}
    \item If $B$ is insufficient (case 1), shift the entire triplet to the right by multiplying by $\textnormal{sf}$: 
        \begin{equation*}
            [L, B, R] \gets [L \cdot \textnormal{sf}, B \cdot \textnormal{sf}, R \cdot \textnormal{sf}],
        \end{equation*}
    \item If $B$ is sufficient (case 2), keep the current $B$,        
    \item If $B$ is excessive (case 3), shift left by dividing by $\textnormal{sf}$: 
        \begin{equation*}
            [L, B, R] \gets [L / \textnormal{sf}, B / \textnormal{sf}, R / \textnormal{sf}].
        \end{equation*}
\end{itemize}
Thus, $B$ represents the approximate plateau onset point.

A fourth, less typical case can occur due to random OOB-score fluctuations, especially for small $B$: $\text{plat}_L \leq \varepsilon$, $\text{plat}_R > \varepsilon$. 
In principle, one could treat this case symmetrically with case 2, i.e., as a stay-case. 
In the present work, however, we group it together with case 1 and interpret it as a shift-right condition, since the score has not yet stabilized to the right of $B$, which suggests that $B$ may still be insufficient.
We prune the current trial in these cases, meaning that not even the OOB score of the largest forest ($R$ trees) is stable enough. 
Grouping case 4 with case 1 partly counteracts the usual tendency of early-stopping rules to underestimate the required tree count once $\text{plat}_L \le \varepsilon$ is already satisfied. 
This can make the estimated number of trees somewhat larger and the pruning induced by the absence of a confirmed plateau more aggressive. 
On the other hand, such a moderate upward bias may also be viewed as favoring a more stable forest.

To avoid accumulation of rounding errors, we update the triplet by reassigning two values and computing the third via multiplication or division by $\textnormal{sf}$:
\begin{itemize}
    \item For a right shift: $[L, B, R] \gets [B, R, R \cdot \textnormal{sf}]$,
    \item For a left shift: $[L, B, R] \gets [L / \textnormal{sf}, L, B]$.
\end{itemize}
After each update, the values are rounded to the nearest integer since the number of trees must be an integer.
In cases 2 and 3, the trial returns the OOB score $S_B$. 
Although $S_R$ might be slightly higher than $S_B$, the algorithm prioritizes finding the smallest adequate ensemble, and since the relative difference is within tolerance, $B$ is chosen as the final size.
Besides plateau-based pruning, the method also supports the standard Optuna pruning mechanism, invoked through \texttt{trial.should\_prune()}.

After completing all trials, the trial with the best OOB score $S_B$ is selected. 
Even though the process should stabilize, a user may specify too few trials, leaving the optimization far from convergence. 
Hence, the best trial might correspond to an excessive tree count. 
To correct this, we apply a simple postprocessing step: we successively shift the best trial’s triplet to the left as long as condition~3 holds and keep the other hyperparameters fixed (revisit phase).
For this purpose, we store both the OOB scores $\text{Score}_k$ and the corresponding triplets 
(original $[L_k, B_k, R_k]$ and shifted $[L_k^\text{next}, B_k^\text{next}, R_k^\text{next}]$)
for each trial during the optimization, enabling us to identify the optimal ensemble size even when the best trial initially has an excessive number of trees.

For clarity, the detailed pseudocode follows the workflow below:
\begin{enumerate}
    \item Initialize the triplet $(L,B,R)$.
    \item Sample the remaining Random Forest hyperparameters and evaluate the OOB score at $L$, $B$, and $R$.
    \item Depending on the plateau conditions, shift right to $R$, stay at $B$, or shift left to $L$.
    \item Right shifts lead to pruning of the trial; stay and left-shift cases are registered together with the corresponding triplet.
    \item After optimization, revisit the best trial and continue shifting left as long as the plateau remains satisfied.
\end{enumerate}

\subsection{Discussion of Key Features}
\begin{figure*}[t]
\centering
\includegraphics[width=\linewidth]{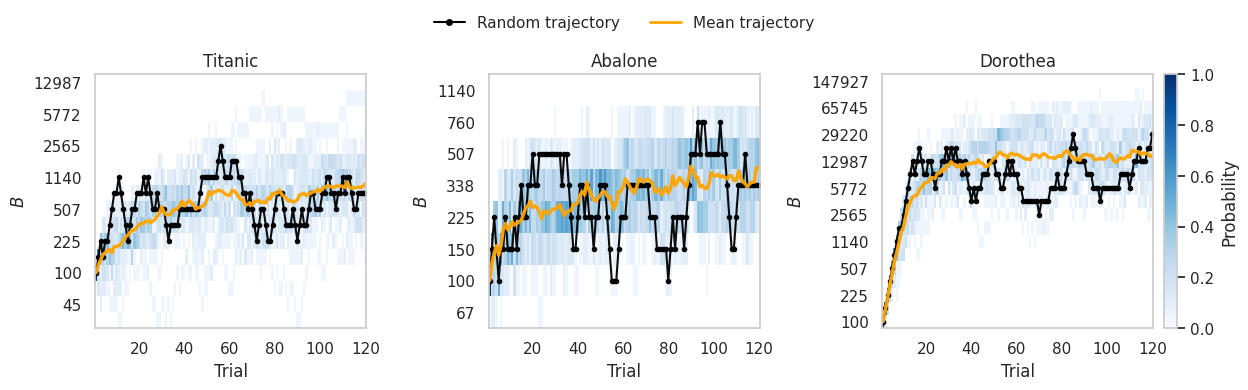}
\caption{Examples of random and mean trajectories of the central triplet point $B$ across HPO trials for three datasets, with the mean computed over 20 runs. 
The background colormap shows the empirical frequency of the corresponding tree count at each trial. 
Here $T_0=100$ and $\text{sf}=1.5$; in the left panel, $\varepsilon=3\times 10^{-3}$, and in the other two panels, $\varepsilon=10^{-3}$.
}
\label{fig:trajectories}  
\end{figure*}
We employ a multiplicative update of the tree count, analogous to the geometric progression used by Oshiro (who effectively used $\textnormal{sf}=2$), rather than an additive increment as in Latinne et al.~\cite{Latinne2001Limiting} and Lange et al.~\cite{Lange2025optRF}.
Accordingly, our grid of candidate tree counts is fixed once the initial value $T_0$ and the scale factor $\textnormal{sf}$ are chosen. 
This design reduces the risk of premature stopping (i.e., selecting too few trees) before the plateau is reached. 
At the same time, if the grid grew faster than $O(\textnormal{sf}^{\,j})$ as $j\to\infty$, it would lead to triplets with excessively large $R$, making the procedure computationally prohibitive.

Each trial evaluates three forest sizes ($L$, $B$, and $R$) instead of one; 
nevertheless, with \param{warm\_start} the cumulative training cost is approximately that of growing the forest up to $R=B\cdot \textnormal{sf}$, 
i.e. $O(R)$ rather than $O(L+B+R)$.
Thus, relative to a standard TPE trial at a comparable tree budget, the additional overhead of the plateau method is mainly due to extra OOB score evaluations and bookkeeping, rather than training three independent ensembles.
In our experiments, this adaptive update scheme typically reaches the stringent tolerance level $\varepsilon=10^{-3}$ substantially faster than the classical TPE's fixed-range setting, which requires specifying an explicit upper bound (e.g., $T_{\max}\approx 2000$) for \param{n\_estimators}.

We note that line~\ref{prune_classic} implements standard Optuna-style pruning, analogous to what is used in the classical TPE baseline: after evaluating the largest budget level (here, $R$), the intermediate result is reported and the trial may be pruned by the chosen pruner.
In contrast, line~\ref{prune_plateau} performs plateau-specific pruning for trials in which a plateau is not reached within the allowed tree budget, and therefore the observed scores are deemed inconclusive for assessing the sampled hyperparameter configuration.

Such trials are discarded from the optimization process: they do not contribute an objective value, are not considered when selecting the best trial, and do not influence subsequent trial generation.
The implementation-level handling of this plateau-specific pruning rule is described in Section~\ref{subsec:implementation_details}.
In particular, a no-plateau trial is terminated without returning an objective value and without reporting the baseline OOB score $S_B$ to Optuna.
We further examine whether such ``inconclusive'' trials consume a substantial portion of the overall budget $n_{\text{trials}}$, and we compare this mechanism to pruning in the Hyperband baseline under typical settings.

The relative-error condition~\eqref{plat_L_R} controls both how far the algorithm advances into the plateau region and how much the reproducibility stability of the ensemble improves. 
In this sense, it acts simultaneously on bias and variance.
Below, we show that the variance of the corresponding relative difference decays at rate $O(T^{-1})$, first for the signed version of~\eqref{plat_L_R} and then after passing to its absolute-value form. 
The signed version is used only as an auxiliary device in the theoretical analysis. 
In the algorithm itself, however, the absolute value is indispensable: without it, random OOB-score fluctuations would systematically favor right shifts of the triplet.

Indeed, when $\varepsilon \approx 0$ and $S_L \approx S_B \approx S_R$, the signed differences
$(S_B-S_L)/S_B$ and $(S_R-S_B)/S_B$ are approximately symmetric around zero, so the four possible sign combinations occur with roughly equal probability.
Under our update rule, two of these combinations trigger a shift to the right (cases 1 and 4), only one triggers a shift to the left (case 3), and one results in staying at $B$ (case 2).
Using absolute values removes this artificial drift by making the decision depend on the magnitude of the change rather than its sign.

Even then, the plateau-onset estimate $B$ may oscillate across trials rather than settling at a single value (see Fig.~\ref{fig:trajectories}).
Such variability arises from exploring different hyperparameter configurations (via TPE) and from the inherent stochasticity of Random Forest training as reflected in the OOB score.
Importantly, our method does \emph{not} rely on a dedicated statistical stopping test that would certify that the current tree count $T=B$ is sufficient at a prescribed confidence level.
Implementing such a test would typically require additional replicated training at fixed hyperparameters (e.g., multiple forests with different random seeds and/or bootstrap resampling of OOB predictions) in order to estimate variability and construct a confidence interval.
Besides increasing the per-trial computational cost, the resulting decision would depend on extra user-chosen parameters (e.g., the confidence level and the number of replications).

To avoid this additional overhead within each trial, we do not introduce an ``artificial'' inner resampling loop (such as bootstrap-based variance estimation) on top of the already stochastic OOB evaluation.
Instead, we exploit the \emph{natural} variability provided by the multi-trial HPO protocol itself: across trials, the procedure observes a range of configurations and OOB realizations, and the best-performing trial is selected in the standard Optuna design.
As a result, choosing a sufficiently large trial budget $n_{\text{trials}}$ becomes particularly important for stable performance.
This is further motivated by the fact that some trials may be terminated after repeated right shifts when no plateau is reached within the allowed budget; such trials are treated as inconclusive and do not contribute to the pool of completed trials used for selecting the final configuration.

Another important practical feature of the proposed method is that, once the HPO process has entered a sufficiently settled regime, the baseline value in the next trial, $B_{k+1}$, is likely to remain close to $B_k$, or at least to one of its neighboring values $L_k$ or $R_k$. Therefore, at each trial $k$ the algorithm evaluates the OOB score only for the nested forests corresponding to $L_k$, $B_k$, and $R_k$, rather than traversing all intermediate ensemble sizes such as $T_0,T_0\cdot \textnormal{sf}, \ldots,L_k/\textnormal{sf}^2,L_k/\textnormal{sf}$, as in monotone early-stopping schemes. Although the dominant computational cost is usually attributed to tree construction itself, these intermediate score evaluations may also be non-negligible in practice. Moreover, when a right shift is required, we do not continue growing the current forest under the same hyperparameter configuration. Instead, the enlarged working tree scale is explored already in the next trial, together with a newly sampled set of the remaining hyperparameters. This avoids overinvesting computation into a potentially unfavorable configuration and also explains the motivation for pruning trials in which no plateau has been confirmed.

Thus, the reliability of our plateau-based algorithm is achieved by aggregating evidence across the optimization process itself: a sufficiently rich sequence of trials provides many noisy comparisons which, in effect, ``vote'' for or against increasing $T$.
Thus, the procedure depends on using a sufficiently large number of trials, but avoids introducing extra tuning knobs such as confidence levels or a fixed number of independent refits (e.g., 10) as in stability-based approaches.
This is an intentional trade-off: it avoids the computational expense of dedicated repeated runs by leveraging the multiplicity of Random Forest trainings performed across the sequential trials of the TPE optimization process itself, and can be viewed as seeking an empirical equilibrium region in which left and right shifts balance out.

This inherent stochasticity also affects the revisit phase, albeit without the additional variation from hyperparameter sampling.
In particular, the procedure that repeatedly shifts the triplet to the left until case~3 is first violated can be viewed as a random walk with an absorbing stopping condition.
As a result, its outcome can be highly variable and may deviate from the equilibrium region of the optimal tree count.
Rather than providing a robust estimate, this phase functions more as an aggressive, one-time reduction of the ensemble size.
Therefore, it may be advisable to disable this phase entirely, particularly when the number of optimization trials is already large or when the tuned model is intended to subsequently grow the forest further to stabilize variable-importance estimates.
\subsection{Theoretical interpretation of the plateau criterion}

Let $S_T$ denote the OOB-based score for a Random Forest with $T$ trees, 
and let $S_\infty = \lim_{T\to\infty} S_T$ denote the score of an infinite forest.
This viewpoint is consistent with the classical convergence result of Breiman~\cite{Breiman2001Random}, 
who showed that the population generalization error converges as the number of trees grows; OOB estimates provide an internal proxy for this convergence.

To connect the plateau criterion with the limiting score, we model the residual convergence of the score by the asymptotic expansion
\begin{equation}
S_T = S_\infty + cT^{-\gamma} + o(T^{-\gamma}),
\qquad T\to\infty,
\label{eq:score_tail_alpha}
\end{equation}
for some constants $c\neq 0$ and $\gamma>0$.

\begin{proposition}
\label{prop:plateau_limit_alpha}
Under \eqref{eq:score_tail_alpha}, for $R = \textnormal{sf} \cdot B$,
\begin{equation}
|S_\infty-S_B| \sim \frac{|S_R-S_B|}{1-\textnormal{sf}^{-\gamma}},
\qquad B\to\infty.
\label{eq:ratio_limit_plateau}
\end{equation}
If, in addition,
\begin{equation}
\left|\frac{S_R-S_B}{S_B}\right|\le \varepsilon,
\label{eq:plateau_condition_BR}
\end{equation}
then
\begin{equation}
\left|\frac{S_\infty-S_B}{S_B}\right|
\le
\frac{\varepsilon}{1-\textnormal{sf}^{-\gamma}}+o(1),
\qquad B\to\infty.
\label{eq:plateau_limit_alpha_rel}
\end{equation}
\end{proposition}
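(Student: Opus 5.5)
The plan is to substitute the expansion \eqref{eq:score_tail_alpha} directly at the two tree counts $B$ and $R=\textnormal{sf}\cdot B$ and compare the leading terms.

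\textbf{Step 1: the two gaps.} From \eqref{eq:score_tail_alpha} at $T=B$,
\begin{equation*}
S_\infty - S_B = -cB^{-\gamma} + o(B^{-\gamma}).
\end{equation*}
Applying the same expansion at $T=R=\textnormal{sf}\,B$, and noting that $o(R^{-\gamma})=o(B^{-\gamma})$ because $\textnormal{sf}$ is a fixed constant, gives $S_R = S_\infty + c\,\textnormal{sf}^{-\gamma}B^{-\gamma}+o(B^{-\gamma})$. Subtracting,
\begin{equation*}
S_R-S_B = -c\,(1-\textnormal{sf}^{-\gamma})B^{-\gamma}+o(B^{-\gamma}).
\end{equation*}

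\textbf{Step 2: the ratio.} Because $c\neq0$, $\gamma>0$ and $\textnormal{sf}>1$, the coefficient $c(1-\textnormal{sf}^{-\gamma})$ is nonzero. Hence both gaps are exactly of order $B^{-\gamma}$, and we may divide:
\begin{equation*}
\frac{S_\infty-S_B}{S_R-S_B}\to\frac{1}{1-\textnormal{sf}^{-\gamma}}.
\end{equation*}
Taking absolute values yields \eqref{eq:ratio_limit_plateau}. The only delicate point is that the ratio is well defined. Since $S_R-S_B$ is asymptotically equivalent to a nonzero multiple of $B^{-\gamma}$, it is nonzero for all sufficiently large $B$, so the ratio makes sense in the tail.

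\textbf{Step 3: the relative bound.} Write
\begin{equation*}
\left|\frac{S_\infty-S_B}{S_B}\right| = \left|\frac{S_R-S_B}{S_B}\right|\cdot\left|\frac{S_\infty-S_B}{S_R-S_B}\right|.
\end{equation*}
The second factor equals $(1-\textnormal{sf}^{-\gamma})^{-1}+o(1)$ by Step 2. The first factor is at most $\varepsilon$ under \eqref{eq:plateau_condition_BR}. Their product is therefore at most $\varepsilon/(1-\textnormal{sf}^{-\gamma}) + \varepsilon\cdot o(1)$.

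This error term is $o(1)$ for a fixed $\varepsilon$. More simply, the first factor tends to $0$ anyway whenever $S_\infty\neq0$, because $S_B\to S_\infty$ while $S_R-S_B\to0$. We implicitly assume $S_\infty\neq 0$, or at least $S_B\neq0$, so that the relative quantities in the statement are defined; this should be noted explicitly. Together these give \eqref{eq:plateau_limit_alpha_rel}.

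\textbf{Main obstacle.} The argument is essentially algebra. The step requiring care is the uniform handling of the remainder terms: we must confirm that $o(R^{-\gamma})$ can be absorbed into $o(B^{-\gamma})$ for fixed $\textnormal{sf}$, and that the nonvanishing of $c$ is what licenses dividing by $S_R-S_B$. Everything else is routine substitution.
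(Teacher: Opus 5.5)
Your proof is correct and follows the paper's argument essentially step for step. Like the paper, you expand $S_B$ and $S_R$ via \eqref{eq:score_tail_alpha}, use $c(1-\textnormal{sf}^{-\gamma})\neq 0$ to pass to the ratio, and then divide by $|S_B|$ and apply \eqref{eq:plateau_condition_BR}; your product factorization in Step 3 is just another way of writing the paper's identity $|S_\infty-S_B| = \bigl((1-\textnormal{sf}^{-\gamma})^{-1}+o(1)\bigr)|S_R-S_B|$, and your remarks on $S_B\neq 0$ and the nonvanishing of $S_R-S_B$ for large $B$ make explicit what the paper leaves implicit.
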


Proposition~\ref{prop:plateau_limit_alpha} shows that, under a power-law tail \eqref{eq:score_tail_alpha}, the plateau criterion controls not only the local score change between $B$ and $R=\textnormal{sf}\cdot B$, but also the relative gap between the current score and its limiting value. 
The factor $(1-\textnormal{sf}^{-\gamma})^{-1}>1$ makes the dependence on the scale factor explicit: for fixed $\gamma$, larger values of $\textnormal{sf}$ strengthen the link between the observed plateau gap and the remaining distance to the limit.
This relation also gives a practical interpretation of the tolerance parameter in terms of the limiting score. 
Asymptotically, using a working tolerance $\varepsilon$ in the algorithm corresponds to a relative gap to the limiting score of approximately
\begin{equation}
\left|\frac{S_\infty-S_B}{S_B}\right|
\lesssim
\varepsilon_{\infty} = \frac{\varepsilon}{1-\textnormal{sf}^{-\gamma}}.
\label{eq:eps_inf}
\end{equation}
Equivalently, if one wishes to prescribe the tolerance directly with respect to the limiting score, namely $\varepsilon_{\infty}$, then the algorithmic tolerance should be chosen as
$\varepsilon = \varepsilon_{\infty}(1-\textnormal{sf}^{-\gamma})$.

The asymptotic relation \eqref{eq:eps_inf} becomes more informative once the sign of the tail coefficient $c$ is taken into account. In the natural case of greater-is-better scores, such as accuracy or ROC-AUC, one typically expects asymptotic improvement with $T$, so that $c<0$ and hence $S_B<S_\infty$ for sufficiently large $B$. Then \eqref{eq:eps_inf} yields the informative upper bound
\begin{equation*}
S_B \le S_\infty \lesssim S_B(1+\varepsilon_\infty).
\end{equation*}
Conversely, for error-like scores, such as MSE, one typically expects asymptotic decrease with $T$, so that $c>0$ and therefore $S_\infty \le S_B$ for sufficiently large $B$. In this case, \eqref{eq:eps_inf} gives the informative lower bound
\begin{equation*}
S_B(1-\varepsilon_\infty)\lesssim S_\infty \le S_B.
\end{equation*}

The exponent $\gamma$ determines how tightly the observed plateau gap is linked to the residual distance to the limit. 
A notable special case is $\gamma=1$, which is consistent with Lopes's asymptotic expansion \eqref{eq:lopes_mean} for the conditional mean misclassification error. 
On the other hand, for the observed OOB score, finite-ensemble fluctuations may dominate the residual behavior, in which case an effective exponent closer to $\gamma=1/2$ is also plausible. 
Thus, keeping $\gamma$ explicit allows the proposition to cover both bias-dominated and fluctuation-dominated regimes.
For $\gamma=1$, the correction factor $1-\textnormal{sf}^{-\gamma}$ equals $1/3$ for $\textnormal{sf}=1.5$ and $1/2$ for $\textnormal{sf}=2$; for $\gamma=1/2$, it equals approximately $0.1835$ and $0.2929$.

We now turn to the stochastic behavior of the relative plateau statistic $\text{plat}_R$ in \eqref{plat_L_R} as $B\to\infty$. 
In particular, we study the conditional variance of the corresponding relative score difference given the training set $D$. 
To this end, we use a first-order delta-method approximation (van der Vaart~\cite{Vaart1998Asymptotic}). We begin with a technical lemma.
\begin{lemma}[Delta-method approximations]
\label{lemma:delta}
Let $S_B$ and $S_R$ be the OOB-based scores of two nested Random Forests with $B$ and $R$ trees, and assume that
\begin{equation*}
\left.
\begin{pmatrix}
S_B\\
S_R
\end{pmatrix}
\;\right|\; D
\approx
\mathcal{N}
\left(
\begin{pmatrix}
\mu_B\\
\mu_R
\end{pmatrix},
\begin{pmatrix}
\sigma_B^2 & \sigma_{BR}\\
\sigma_{BR} & \sigma_R^2
\end{pmatrix}
\right)
\end{equation*}
and $\mu_B\neq 0$.
Then the delta method gives
\begin{align}
\mathbb{E}\left[\frac{S_R-S_B}{S_B}\;\middle|\;D\right]
& \approx
\frac{\mu_R-\mu_B}{\mu_B}
+\frac{\mu_R}{\mu_B^3}\sigma_B^2
-\frac{1}{\mu_B^2}\sigma_{BR}, \label{eq:delta_mean_G} \\
\mathrm{Var}\left[ \frac{S_R-S_B}{S_B} \;\middle|\; D \right]
& \approx
\frac{\mu_R^2}{\mu_B^4}\sigma_B^2 +
\frac{1}{\mu_B^2}\sigma_R^2 -
2\frac{\mu_R}{\mu_B^3}\sigma_{BR}.
\label{eq:delta_var_G}
\end{align}
\end{lemma}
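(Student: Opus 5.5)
We will apply the standard multivariate delta method to the smooth map $g(s_B,s_R)=(s_R-s_B)/s_B = s_R/s_B - 1$, which is well defined and infinitely differentiable in a neighbourhood of $(\mu_B,\mu_R)$ because $\mu_B\neq 0$. Conditionally on $D$, we treat $(S_B,S_R)$ as approximately Gaussian with the stated moments and expand $g$ around the mean vector. All statements are first-order (for the variance) or second-order (for the mean) approximations in the sense of van der Vaart~\cite{Vaart1998Asymptotic}. They are meaningful because $\sigma_B^2,\sigma_R^2,\sigma_{BR}$ are small, of order $O(B^{-1})$ by~\eqref{eq:var}.

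First we compute the derivatives at $(\mu_B,\mu_R)$. The gradient is
$\nabla g=\bigl(-\mu_R/\mu_B^2,\; 1/\mu_B\bigr)$. The Hessian has entries
$g_{BB}=2\mu_R/\mu_B^3$, $g_{BR}=-1/\mu_B^2$ and $g_{RR}=0$.

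For the variance~\eqref{eq:delta_var_G}, the first-order expansion
$g(S_B,S_R)\approx g(\mu_B,\mu_R)+\nabla g^\top (S-\mu)$ gives $\mathrm{Var}[g\mid D]\approx \nabla g^\top\Sigma\nabla g$. Writing out the quadratic form yields exactly $\frac{\mu_R^2}{\mu_B^4}\sigma_B^2+\frac{1}{\mu_B^2}\sigma_R^2-2\frac{\mu_R}{\mu_B^3}\sigma_{BR}$.

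For the mean~\eqref{eq:delta_mean_G}, a first-order expansion only reproduces $(\mu_R-\mu_B)/\mu_B$, so we use the second-order Taylor expansion. Taking conditional expectations, the linear term vanishes, and the quadratic term gives $\tfrac12\operatorname{tr}(H\Sigma)=\tfrac12\bigl(g_{BB}\sigma_B^2+2g_{BR}\sigma_{BR}+g_{RR}\sigma_R^2\bigr)=\frac{\mu_R}{\mu_B^3}\sigma_B^2-\frac{1}{\mu_B^2}\sigma_{BR}$. This matches the stated formula.

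The main obstacle is rigour rather than algebra. Under an exact Gaussian law the ratio $S_R/S_B$ has no finite moments, since the denominator puts mass near zero. The approximations must therefore be read as moments of the delta-method expansion, equivalently as the leading terms of an asymptotic expansion as $B\to\infty$ with the covariances shrinking. Alternatively, one can appeal to the boundedness of typical scores away from zero (e.g.\ $S_B\in[\mu_B/2,\,3\mu_B/2]$ with overwhelming probability), which makes the remainder terms negligible.

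I would state this interpretation explicitly and bound the remainders. The third-order remainder contributes $O(\|\Sigma\|^{3/2})$ to the mean, and the second-order terms contribute $O(\|\Sigma\|^{2})$ to the variance; both are of smaller order than the retained terms.
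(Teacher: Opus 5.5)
Your proposal is correct and follows essentially the same route as the paper. Both define $g(x,y)=y/x-1$, use the same gradient and Hessian at $(\mu_B,\mu_R)$, take a second-order expansion for the conditional mean (where the linear term vanishes), and use the first-order quadratic form $\nabla g^\top\Sigma\nabla g$ for the variance. Your extra caveat, that under an exact Gaussian law the ratio has no finite moments and so the formulas must be read as moments of the truncated expansion, is sound and is left implicit in the paper.
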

More precisely, \eqref{eq:delta_mean_G} combines the zeroth- and second-order contributions in the Taylor expansion of $(S_R-S_B)/S_B$, whereas \eqref{eq:delta_var_G} is given by the first-order delta-method approximation.

Motivated by Lopes's bound~\eqref{eq:lopes_bound}, it is natural to model the conditional variance--covariance terms at the first order as
\begin{equation}
\sigma_B^2 \sim \frac{v}{B}, \quad
\sigma_R^2 \sim \frac{v}{R}, \quad 
\sigma_{BR} \sim \frac{v}{R}, \quad 
B\to\infty,\  R=\textnormal{sf}\cdot B,
\label{eq:remark}
\end{equation}
for some constant $v=v(D)>0$.
Under \eqref{eq:remark}, the corresponding inter-forest correlation satisfies $\sigma_{BR}/(\sigma_B \cdot \sigma_R) \sim 1/\sqrt{\textnormal{sf}}$.
In the special case $\textnormal{sf}=1$, this reduces naturally to $\sigma_B^2=\sigma_R^2=\sigma_{BR}$.

\begin{proposition}
\label{prop:delta_asymptotic}
Assume that $\mu_B,\mu_R \to S_\infty \neq 0$ as $B\to\infty$, with $R=\textnormal{sf}\cdot B$, and that \eqref{eq:remark} holds. 
Then
\begin{equation}
\mathrm{Var}\left[\frac{S_R-S_B}{S_B}\middle| D\right]
\sim
\frac{v}{S_\infty^2}\cdot\frac{1-\textnormal{sf}^{-1}}{B},
\qquad B\to\infty.
\label{eq:prop2_signed}
\end{equation}
\end{proposition}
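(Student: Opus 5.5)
We start from the first-order delta-method expression \eqref{eq:delta_var_G} of Lemma~\ref{lemma:delta} and substitute the asymptotic models \eqref{eq:remark} for $\sigma_B^2$, $\sigma_R^2$ and $\sigma_{BR}$. In other words, we treat \eqref{eq:delta_var_G} as the working definition of the conditional variance and study only its leading-order behaviour as $B\to\infty$, with $R=\textnormal{sf}\cdot B$.

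First, we factor out $1/\mu_B^2$ and rewrite \eqref{eq:delta_var_G} as
\begin{equation*}
\frac{1}{\mu_B^2}\left(\rho_B^2\,\sigma_B^2+\sigma_R^2-2\rho_B\,\sigma_{BR}\right),
\qquad \rho_B=\frac{\mu_R}{\mu_B}.
\end{equation*}
Since $\mu_B,\mu_R\to S_\infty\neq 0$, we have $\rho_B\to1$ and $1/\mu_B^2\to 1/S_\infty^2$.

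Next, we insert \eqref{eq:remark}, written as $\sigma_B^2=\frac{v}{B}(1+o(1))$, $\sigma_R^2=\frac{v}{\textnormal{sf}\,B}(1+o(1))$ and $\sigma_{BR}=\frac{v}{\textnormal{sf}\,B}(1+o(1))$. The bracket then becomes
\begin{equation*}
\frac{v}{B}\left(\rho_B^2+\textnormal{sf}^{-1}-2\rho_B\,\textnormal{sf}^{-1}\right)+o(B^{-1}).
\end{equation*}
Its coefficient tends to $1+\textnormal{sf}^{-1}-2\,\textnormal{sf}^{-1}=1-\textnormal{sf}^{-1}$. Multiplying by $1/\mu_B^2\sim 1/S_\infty^2$ gives \eqref{eq:prop2_signed}.

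The one delicate point is that the three terms partially cancel: at $\textnormal{sf}=1$ the limiting coefficient is $0$. An asymptotic equivalence $\sim$ therefore requires a nonzero leading coefficient, so we assume $\textnormal{sf}>1$, as the algorithm does. With $\textnormal{sf}>1$ the $o(1)$ corrections in \eqref{eq:remark} and in $\rho_B-1$ only perturb the bounded coefficient. Hence the $o(1)/B$ remainders are genuinely negligible against $(1-\textnormal{sf}^{-1})v/B$, and no finer rates for $\mu_R-\mu_B$ are needed. We would state this explicitly, noting that for $\textnormal{sf}=1$ the leading term vanishes, consistent with $S_R=S_B$.
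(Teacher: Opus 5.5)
Your proposal is correct and matches the paper's argument, which likewise substitutes $\mu_B,\mu_R\to S_\infty$ and \eqref{eq:remark} into the delta-method variance \eqref{eq:delta_var_G} to obtain the coefficient $1+\textnormal{sf}^{-1}-2\,\textnormal{sf}^{-1}=1-\textnormal{sf}^{-1}$. The paper additionally remarks, heuristically, that higher-order Taylor terms contribute only $o(B^{-1})$, so the asymptotics is claimed for the exact conditional variance; your explicit note that $\textnormal{sf}>1$ is needed for a nonzero leading coefficient, and hence for $\sim$ to be meaningful, is a useful clarification the paper leaves implicit.
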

This follows immediately by substituting $\mu_B,\mu_R \to S_\infty$ and \eqref{eq:remark} into \eqref{eq:delta_var_G} from Lemma~\ref{lemma:delta}.
Under the assumed $O(B^{-1})$ variance--covariance scaling in \eqref{eq:remark}, the conditional variance asymptotics is fully governed by \eqref{eq:delta_var_G}, because the Taylor terms beyond the first order contribute only $o(B^{-1})$.

Thus, \eqref{eq:prop2_signed} gives the asymptotic variance of the signed relative difference $(S_R-S_B)/S_B$. 
As expected, it is of order $O(B^{-1})$, so the corresponding standard deviation is of order $O(B^{-1/2})$. 
The factor $1-\textnormal{sf}^{-1}$ makes the role of the geometric step explicit: larger values of $\textnormal{sf}$ lead to larger relative stochastic fluctuations between the two compared ensemble sizes.

We now pass from the signed relative difference to the absolute relative error used in the plateau criterion. 
The fluctuation order remains the same, whereas the leading variance constant changes under the absolute-value transformation.

\begin{proposition}
\label{prop:absolute_variance}
Assume that the conditional mean  trajectory
$\mu_T=\mathbb{E}[S_T|D]$
satisfies
\begin{equation*}
\mu_T = S_\infty + cT^{-\gamma} + o(T^{-\gamma}),
\qquad T\to\infty, \quad c\neq 0,
\end{equation*}
with $\gamma>1/2$, that \eqref{eq:remark} holds, and that the signed relative error $(S_R-S_B)/S_B$, conditionally on $D$, is approximately Gaussian. Then
\begin{equation}
\mathrm{Var}\left[\left|\frac{S_R-S_B}{S_B}\right|\middle| D\right]
\sim
\left(1-\frac{2}{\pi}\right)
\frac{v}{S_\infty^2}\cdot\frac{1-\textnormal{sf}^{-1}}{B},
\qquad B\to\infty.
\label{eq:prop3_main}
\end{equation}
\end{proposition}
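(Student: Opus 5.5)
Let $G=(S_R-S_B)/S_B$. The plan is to show that $G$, conditionally on $D$, is asymptotically a Gaussian whose mean is negligible compared with its standard deviation. Then $|G|$ behaves like the absolute value of a centered Gaussian, i.e.\ a half-normal variable, whose variance is $(1-2/\pi)$ times the variance of the underlying Gaussian.

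First I will identify the two moments of $G$ from Lemma~\ref{lemma:delta}. By Proposition~\ref{prop:delta_asymptotic}, $s_B^2:=\mathrm{Var}[G\mid D]\sim \frac{v}{S_\infty^2}\cdot\frac{1-\textnormal{sf}^{-1}}{B}$, so $s_B\asymp B^{-1/2}$. For the mean $m_B:=\mathbb{E}[G\mid D]$ I will use \eqref{eq:delta_mean_G}. The leading term is
\[
\frac{\mu_R-\mu_B}{\mu_B}=\frac{c(\textnormal{sf}^{-\gamma}-1)B^{-\gamma}+o(B^{-\gamma})}{S_\infty+o(1)}=O(B^{-\gamma}),
\]
by the assumed expansion of $\mu_T$. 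The second-order corrections $\frac{\mu_R}{\mu_B^3}\sigma_B^2-\frac{1}{\mu_B^2}\sigma_{BR}$ are $O(B^{-1})$ by \eqref{eq:remark}; in fact their leading parts $v/(S_\infty^2B)-v/(S_\infty^2\,\textnormal{sf}\,B)$ need not cancel, but they are $O(B^{-1})$ regardless. Hence $m_B=O(B^{-\gamma})+O(B^{-1})$. Since $\gamma>1/2$, this gives $m_B/s_B=O(B^{1/2-\gamma})+O(B^{-1/2})\to 0$. This is exactly where the hypothesis $\gamma>1/2$ is used.

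Next I will compute the variance of the absolute value. Write $G\approx m_B+s_BZ$ with $Z\sim\mathcal N(0,1)$ and set $\theta_B=m_B/s_B$. Then $|G|=s_B|\theta_B+Z|$, a folded normal variable, with
\[
\mathrm{Var}\,|\theta+Z| = 1+\theta^2-\Big(\sqrt{2/\pi}\,e^{-\theta^2/2}+\theta\,(1-2\Phi(-\theta))\Big)^2 .
\]
This expression is continuous in $\theta$ and equals $1-2/\pi$ at $\theta=0$. Therefore
\[
\mathrm{Var}[|G|\mid D]=s_B^2\,\big(1-2/\pi+o(1)\big),
\]
and combining this with the asymptotics of $s_B^2$ yields \eqref{eq:prop3_main}.

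The main obstacle is rigor in the Gaussian step. The assumption is only "approximately Gaussian," so one must make sure that approximating $G$ by $m_B+s_BZ$ affects $\mathrm{Var}|G|$ only at order $o(B^{-1})$. I would handle this by working with the standardized variable $(G-m_B)/s_B$ and assuming it converges in distribution to $Z$ with uniformly integrable squares. Then $|(G-m_B)/s_B+\theta_B|$ converges to $|Z|$ in distribution, and the second moments converge as well, which gives the $o(1)$ relative error. As in Proposition~\ref{prop:delta_asymptotic}, the paper treats these delta-method remainders heuristically, so I would state the uniform-integrability condition as the precise interpretation of the Gaussian approximation.
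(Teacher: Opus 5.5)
Your proposal is correct and follows essentially the same route as the paper's proof. Like the paper, you bound the conditional mean via Lemma~\ref{lemma:delta} by $O(B^{-\gamma})+O(B^{-1})$, take the variance from Proposition~\ref{prop:delta_asymptotic}, use $\gamma>1/2$ to make the standardized mean vanish, and let it tend to $0$ in the folded-normal variance formula to extract the factor $1-2/\pi$. Your uniform-integrability reading of ``approximately Gaussian'' sharpens a step the paper treats heuristically; it does not depart from the paper's argument.
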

The derivation of the above asymptotic formula also relies on the same tail model \eqref{eq:score_tail_alpha}
for the conditional mean trajectory, together with the asymptotic behavior of the conditional mean
$\mathbb{E}[(S_R-S_B)/S_B\mid D]$
provided by Lemma~\ref{lemma:delta}.
The condition $\gamma>1/2$
is needed only for the derivation: the formula itself does not involve $\gamma$, and passing to the absolute value merely reduces the leading variance constant by the factor $1-2/\pi<1$.
Hence, in the main case of interest, $\gamma=1$, the variance of the plateau statistic $\textnormal{plat}_R$ in \eqref{plat_L_R} is asymptotically given by \eqref{eq:prop3_main}.

\subsection{Tolerance Parameter Selection}
When working with classification metrics such as accuracy and ROC-AUC, special attention must be paid to the choice of the tolerance parameter $\varepsilon$ in the plateau condition \eqref{plat_L_R}. 
For accuracy, the minimal meaningful change is approximately $1/n$, corresponding to a change of one correctly classified sample. 
Therefore, a reasonable lower bound for the tolerance relative to a baseline accuracy $\text{acc}_\text{min}$ (e.g. $0.6$) is:
\begin{equation}
\varepsilon_\text{acc} \gtrsim \frac{1}{n \cdot \text{acc}_\text{min}}.
\label{eq:eps_acc}
\end{equation}

For ROC-AUC, the empirical score is derived from the ranking of positive and negative samples. 
A fundamental change in this ranking corresponds to altering the order of one positive-negative pair, which changes the ROC-AUC by $\pm 1/(n_{+} n_{-})$, 
where $n_{+}$ and $n_{-}$ are the numbers of positive and negative samples ($n_{+}+n_{-}=n$).
Therefore, this value serves as a natural scale for meaningful differences. 
Assuming a baseline $\text{auc}_{\min}$, a practical reference for the relative tolerance is:
\begin{equation}
\varepsilon_{\text{auc}} \gtrsim \frac{1}{n_+ n_- \cdot \text{auc}_{\min}}.
\label{eq:eps_auc}
\end{equation}
However, although any nonzero change in ROC-AUC is discrete (an integer multiple of $1/(n_{+}n_{-})$), the net change can still be zero due to cancellations across multiple pairwise swaps.

We now extend this reasoning to multi-class classification with $K>2$ classes, considering two common averaging schemes: One-vs-Rest (OvR) and One-vs-One (OvO). 
In macro-averaged OvR, $K$ binary classifiers are evaluated, and their metrics are averaged. 
The reference quantization scale for the tolerance becomes
\begin{equation}
\varepsilon_{\text{auc-ovr}} \gtrsim 
\frac{1}{K \cdot \text{auc}_{\min}}
\min_{i} \frac{1}{n_i(n - n_i)},
\label{eq:eps_auc_ovr}
\end{equation}
where $n_i$ is the number of samples in class $i$. 
In the OvO scheme, all $K(K-1)/2$ pairwise class comparisons are averaged. 
A minimal change in ranking affects exactly one such pair, leading to the guideline:
\begin{equation}
\varepsilon_{\text{auc-ovo}} \gtrsim 
\frac{2}{K(K-1) \cdot \text{auc}_{\min}}
\min_{i < j} \frac{1}{n_i n_j}.
\label{eq:eps_auc_ovo}
\end{equation}
For a more robust algorithmic behavior --- i.e., to prevent unjustified growth of the tree count and ensure recurrent left shifts --- one may use ``max'' instead of ``min'' in \eqref{eq:eps_auc_ovr} and \eqref{eq:eps_auc_ovo}.
This yields a larger tolerance, making the plateau condition more conservative and less sensitive to random fluctuations and preventing excessive growth, at the risk of potentially stopping before the performance has stabilized enough.

As evident from \eqref{eq:eps_auc_ovr} and \eqref{eq:eps_auc_ovo}, increasing $K$ can reduce these practical scales compared to the binary case \eqref{eq:eps_auc}, whereas decreasing individual class counts $n_i$ has the opposite effect. 
In the balanced case where $n_i = n/K$ these scales simplify to:
\begin{align}
\varepsilon_{\text{auc-ovr}} &\gtrsim \frac{K}{(K-1) n^{2} \cdot \text{auc}_{\min}}, 
\label{eq:eps_auc_ovr_balanced}\\
\varepsilon_{\text{auc-ovo}} &\gtrsim \frac{2K}{(K-1) n^{2} \cdot \text{auc}_{\min}}.
\label{eq:eps_auc_ovo_balanced}
\end{align}
The expression \eqref{eq:eps_auc_ovo_balanced} coincides with \eqref{eq:eps_auc} for $K=2$ and becomes smaller for $K>2$.

As derivations \eqref{eq:eps_acc} -- \eqref{eq:eps_auc_ovo_balanced} show, the key practical guideline is to avoid setting the tolerance $\varepsilon$ too small. 
In general, this is more critical for accuracy than for ROC-AUC, since $\varepsilon_{\text{acc}} \gtrsim O(1/n)$ as $n \to \infty$, whereas for ROC-AUC under class balance we have $\varepsilon_{\text{auc}} \gtrsim O(1/n^2)$. 
It is also noteworthy that these theoretical scales depend only on the sample size $n$ and not on the number of features $p$. Consequently, special care is required when applying the plateau criterion to datasets with small $n$.

\subsection{Implementation Details}
\label{subsec:implementation_details}
The proposed open-source library \texttt{rf\_plateau\_hpo} is implemented in Python and builds upon two established foundations:
the \texttt{RandomForestClassifier} and \texttt{RandomForestRegressor} classes from scikit-learn for predictive model construction and the Optuna framework for hyperparameter optimization.
It provides three tuning functions:
\begin{enumerate}
    \item \texttt{tune\_rf\_oob()}: classic TPE-based tuning with a user-defined range for the number of trees;
    \item \texttt{tune\_rf\_oob\_bohb()}: a BOHB-like multi-fidelity HPO baseline where the number of trees is used as the budget and is allocated according to a fixed geometric ladder;  
    \item \texttt{tune\_rf\_oob\_plateau()}: the plateau-driven approach proposed in this paper.
\end{enumerate}

All three functions expose a common interface for tuning the main structural hyperparameters of Random Forest.
The user can specify the following search spaces:
\begin{itemize}
    \item \param{max\_features\_grid}: values for the number of features considered at each split (default: $\sqrt{p}$, $0.25$, $1/3$, $0.5$, $0.7$, $1.0$);
    \item \param{max\_depth\_range}: tree depth bounds (default: $1$; $40$);
    \item \param{min\_samples\_leaf\_range}: minimum leaf size bounds (default: $1$; $20$);
    \item \param{min\_samples\_split\_range}: minimum split size bounds (default: $2$; $40$);
    \item \param{tune\_criterion}: whether to tune the split criterion; if ``True'', choose among
    ``gini'', ``entropy'', ``log\_loss'' (classification) or ``squared\_error'', ``absolute\_error'' (regression); otherwise use a fixed criterion (default: ``True'').
\end{itemize}
The experiments below evaluate all three methods under this common setup.

The key difference is how the functions handle the number of trees, \param{n\_estimators}.
The classic \texttt{tune\_rf\_oob()} requires an explicit search interval \param{n\_estimators\_range} (default: 50; 2000).
In contrast, the function \texttt{tune\_rf\_oob\_plateau()} replaces it with an adaptive search controlled by three parameters and a safety cap:
\begin{itemize}
    \item \param{n\_estimators\_start} ($T_0$): initial central value of the triplet (default: 100);
    \item \param{scale\_factor} ($\textnormal{sf}$): multiplicative step for shifting the triplet (default: 1.5);
    \item \param{delta} ($\varepsilon$): relative tolerance in the plateau condition (default: $10^{-3}$);
    \item \param{max\_trees}: hard cap on the tree count (default: 10000).
\end{itemize}

The function \texttt{tune\_rf\_oob\_bohb()} is introduced as a baseline for comparison with the proposed plateau-based method.
In each trial, the number of trees is treated as a computational budget and is increased sequentially along a fixed, finite geometric ladder \texttt{n\_estimators\_ladder}, whereas all remaining hyperparameters are sampled using the TPE strategy.
At each rung, the function evaluates the OOB score.
It then applies Hyperband-style pruning (Successive Halving) with parameter $\eta=\texttt{hyperband\_reduction\_factor}$ (default: 3), promoting only a subset of configurations to the next (more expensive) tree budget level.

Internally, the plateau method is implemented as a custom Optuna objective.
For each trial (i.e., each sampled hyperparameter set \param{max\_features}, \param{max\_depth}, etc.), it trains forests of size $L$, $B$, and $R$ using \param{warm\_start}=``True'', evaluates the OOB scores, checks $\text{plat}_L$ and $\text{plat}_R$, and shifts the triplet accordingly.
If the search fails to reach a plateau within the allowed tree budget, the trial is treated as inconclusive and is pruned by the plateau-specific rule.

For this plateau-specific pruning, the implementation raises \texttt{optuna.TrialPruned()} before returning an objective value and without calling \texttt{trial.report(...)} for the baseline OOB score $S_B$.
Thus, no-plateau trials are recorded only as pruned trials with no objective value and no OOB intermediate values available to the sampler.
Only trials that confirm a right-side plateau return $S_B$ as the objective value.
All experiments were run with Optuna~3.5.1.
This plateau-specific no-plateau rule is separate from standard Optuna pruning, which may be applied only after a plateau has already been confirmed and an intermediate value has been reported.
All functions support arbitrary user-provided OOB metrics (e.g., accuracy, ROC-AUC) and integrate with Optuna's sampling and pruning.

All functions return the fitted Random Forest model and the Optuna \param{study} object.
In addition, the function \texttt{tune\_rf\_oob\_plateau()} returns the selected tree count \param{best\_n\_estimators} (i.e., $T$) and a flag indicating whether any trial successfully reached a plateau.
The selected $T$ is also stored in the trial metadata (user attributes) for downstream analysis.
In practice, \param{n\_trials} and \param{max\_trees} should be large enough (i) to let the initial triplet shift into the plateau region and (ii) to explore other hyperparameters once $T$ has stabilized.
If no plateau is reached, the flag is set to ``False'', and the user should increase \param{n\_trials}/\param{max\_trees} or adjust the starting parameters.

\section{Results}
\label{sec:results}
\subsection{Datasets}
\begin{table*}[t]
\centering
\caption{Characteristics of the benchmark datasets, the score evaluation metrics, and the minimal plateau tolerance ($\varepsilon_{\min}$) used for tuning.}
\label{tab:datasets}
\begin{tabular}{ccccccc}
\toprule
\textbf{dataset} & 
\makecell*{\textbf{size} $n\times p$ \\ \textbf{(after preprocessing)}} & 
\textbf{problem} & 
\makecell*{\textbf{number} \\ \textbf{of classes}} & 
\textbf{imbalance} & 
\textbf{metric} &  
$\mathbf{\varepsilon_{\min}}$\\
\midrule
Iris & $150 \times 4$ & classification & 3 & NO & ROC-AUC & $1\times 10^{-3}$ \\
Wine & $178 \times 13$ & classification & 3 & NO & ROC-AUC & $1\times 10^{-3}$ \\
BreastCancer & $569 \times 30$ & classification & 2 & NO & ROC-AUC & $1\times 10^{-3}$ \\
CreditCardDefault & $30000 \times 23$ & classification & 2 & NO & ROC-AUC & $1 \times 10^{-3}$ \\
GiveMeSomeCredit & $120269 \times 10$ & classification & 2 & YES & ROC-AUC & $1 \times 10^{-3}$ \\
Titanic & $891 \times 11$ & classification & 2 & NO & Accuracy & $3\times 10^{-3}$ \\
Diabetes & $442 \times 10$ & regression & -- & -- & RMSE & $1 \times 10^{-3}$ \\
Abalone & $4177 \times 8$ & regression & -- & -- & RMSE & $1 \times 10^{-3}$ \\
CaliforniaHousing & $20640 \times 8$ & regression & -- & -- & RMSE & $1 \times 10^{-3}$ \\
GeneExpressionCancer RNA-Seq &
$801 \times 20531$ &
classification & 5 & YES & Accuracy & $3 \times 10^{-3}$ \\
Arcene &
$200 \times 10000$&
classification & 2 & NO & ROC-AUC & $1 \times 10^{-3}$ \\
Dorothea &
$1150 \times 100000$ &
classification & 2 & YES & ROC-AUC & $1 \times 10^{-3}$ \\
\bottomrule
\end{tabular}
\setlength{\tabcolsep}{1pt}
\end{table*}
We conducted a comprehensive comparative analysis of the proposed triplet-based plateau search algorithm for Random Forest hyperparameter tuning. 
The method was benchmarked against a standard TPE-based search and against a naive early-stopping baseline under classical experimental conditions.
In the main TPE--PLATEAU comparison, we additionally investigated how different configuration options affect performance.
We further compared the plateau-driven pruning mechanism with multi-fidelity Hyperband-style pruning, where the number of trees serves as the resource (budget) along a predefined ladder. 
In addition, we evaluated joint optimization of the tree budget together with the remaining hyperparameters versus a two-stage (decoupled) strategy. 
Finally, we performed sensitivity analyses with respect to the scale factor $\textnormal{sf}$ and the tolerance parameter $\varepsilon$, highlighting their impact on predictive performance and computational cost.

The experiments utilized 12 datasets sourced from the UCI Machine Learning Repository (Kelly et al.~\cite{Kelly2023Uci}), Kaggle~\cite{Kaggle2025Kaggle},
and scikit-learn built-in datasets. 
This collection includes 9 classification and 3 regression tasks.
The datasets vary in size, ranging from $n=150$ samples in the \dataset{Iris} dataset to $n=120,269$ samples in \dataset{GiveMeSomeCredit}. 
The collection also includes low‑density bioinformatics datasets: \dataset{GeneExpressionCancer RNA‑Seq}, \dataset{Arcene}, and \dataset{Dorothea}.
In the \dataset{GiveMeSomeCredit} dataset, samples containing missing values were removed. 
For the \dataset{Titanic} dataset, one-hot encoding was applied to categorical features, and complex string-based attributes were parsed into several newly constructed features.

For classification tasks with imbalanced class distributions, non-uniform class weighting was applied via the \param{class\_weight} parameter.
This is a standard option in scikit-learn's \texttt{RandomForestClassifier} that incorporates class weights into the split criterion (Gini impurity or entropy) minimized during the growth of each tree, without affecting the evaluation metric itself. 
Regarding the choice of evaluation metric for defining the OOB score, we primarily used ROC-AUC for classification and root mean squared error (RMSE) for regression. 
For the \dataset{Titanic} dataset --- a binary classification task with balanced classes --- we used the accuracy metric as an illustrative example, following the convention established in the corresponding Kaggle competition.

In selecting $\varepsilon$, we follow a natural heuristic that an analyst might adopt: to make the ensemble large enough so that the fluctuations in the score do not exceed 1 percent --- i.e., $\varepsilon = 10^{-2}$ --- and, for greater reliability, use values down to $\varepsilon = 10^{-3}$.  
All tolerance values between these two can be considered acceptable when training a Random Forest.
However, the considerations outlined in \eqref{eq:eps_acc}--\eqref{eq:eps_auc_ovo_balanced} should be taken into account.

In our experiments we use values starting from $\varepsilon_{\min} = 10^{-3}$, and then $3\times 10^{-3}$, $5\times 10^{-3}$, $7\times 10^{-3}$, and $9\times 10^{-3}$.
For the \dataset{Titanic} and \dataset{GeneExpressionCancer RNA-Seq} datasets, however, taking into account \eqref{eq:eps_acc}, we start from $\varepsilon_{\min} = 3\times 10^{-3}$.

For regression datasets we set $\varepsilon_{\min} = 10^{-3}$ (see Table \ref{tab:datasets}).  
For the multiclass datasets (\dataset{Iris} and \dataset{Wine}), we evaluate the OOB score using the macro-averaged One-vs-Rest ROC-AUC, since OvR provides a straightforward per-class interpretation and macro-averaging assigns equal weight to all classes, which is natural for (approximately) balanced class sizes.
In the \dataset{Iris} dataset, all three classes have the same size.
The scale \eqref{eq:eps_auc_ovr_balanced} yields a value of about $10^{-4}$; therefore, we keep $\varepsilon_{\min} = 10^{-3}$.  
For the \dataset{Wine} dataset, the scale given by \eqref{eq:eps_auc_ovr} is also about $10^{-4}$ (even when using $\max$ instead of $\min$), so $10^{-3}$ is likewise appropriate.  
The datasets \dataset{BreastCancer}, \dataset{CreditCardDefault}, \dataset{GiveMeSomeCredit},
\dataset{Arcene},
and \dataset{Dorothea} have a sufficiently large $n$ to justify this tolerance.  
\subsection{Experiment Design}
\label{subsec:exp_design}
Our experimental protocol is guided by the following research questions:
\begin{itemize}
    \item How does the proposed plateau-search method differ from the classic TPE hyperparameter search in terms of runtime, the selected number of trees $T$, and sensitivity to configuration options?
    \item How does the number of optimization trials $n_{\text{trials}}$ affect the best achieved score after tuning, and what values of $n_{\text{trials}}$ are sufficient in practice, especially given that some trials may be pruned when a plateau is not reached?
    \item What is the impact of allowing the split criterion to be tuned?
    \item What is the impact of tuning additional hyperparameters beyond tree depth, namely \param{max\_features}, \param{min\_samples\_split}, and \param{min\_samples\_leaf}?
    \item How critical is joint hyperparameter optimization in Random Forest --- and in particular for the proposed plateau-search --- compared to a decoupled (two-stage) strategy where the tree budget is tuned separately from the remaining hyperparameters?
    \item How sensitive is the proposed plateau-based method to the choice of the scale factor \textnormal{sf} and the tolerance parameter $\varepsilon$ in terms of predictive performance and computational cost?
\end{itemize}
We consider the following experimental cases:
\begin{figure*}[t]
\centering
\includegraphics[width=\linewidth]{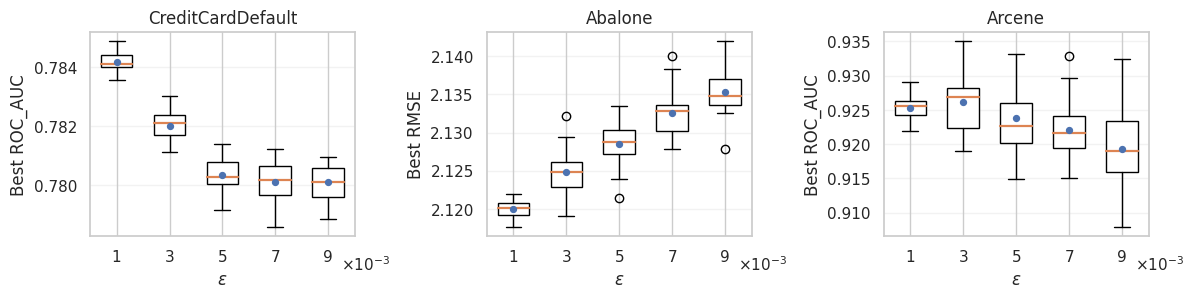}
\caption{Sensitivity of the best achieved OOB performance to the tolerance parameter $\varepsilon$ for three representative datasets: a large-$n$ classification task (\dataset{CreditCardDefault}, ROC-AUC), a regression task (\dataset{Abalone}, RMSE), and a high-dimensional classification task (\dataset{Arcene}, ROC-AUC).
Boxplots summarize 20 random seeds for each $\varepsilon$; the central line denotes the median and boxes indicate the interquartile range (whiskers follow the standard boxplot convention).
Experiments use \textsc{tune\_criterion}=``No'', \textsc{only\_depth}=``No'', $\text{sf}=1.5$, and $n_{\text{trials}}=120$.
}
\label{fig:deltas}  
\end{figure*}
\begin{itemize}
    \item \textbf{TPE}: classic TPE-based joint optimization of all Random Forest hyperparameters, where \param{n\_estimators} is sampled from the fixed range $[T_{\min}, T_{\max}]$ with $T_{\min}=100$ and $T_{\max}=2565$.
    \item \textbf{TPE\_Tmin--Tmax}: a decoupled counterpart of TPE. First, TPE tunes the remaining Random Forest hyperparameters with \param{n\_estimators} fixed at $T_{\min}=100$. Second, a Random Forest is trained using the obtained hyperparameters at \param{n\_estimators}=$T_{\max}=2565$.
    \item \textbf{HB}: a multi-fidelity Hyperband-style baseline that treats \param{n\_estimators} as the resource and evaluates each configuration on the fixed geometric ladder
    $[100, 150, 225, 338, 507, 760, 1140, 1710, 2565]$,
    using $\eta=\texttt{hyperband\_reduction\_factor}=3$.
    \item \textbf{ES}: a naive monotone early-stopping baseline. At each trial, it incrementally traverses the geometric ladder $T_{\min}\,\textnormal{sf}^{\,j}$, $j=0,1,\ldots$, and stops at the first ladder point for which the left relative plateau condition $\text{plat}_L \le \varepsilon$ is satisfied. 
    Unlike fixed-range baselines, it does not use an explicit upper bound $T_{\max}$; instead, the ladder is truncated only by a large safety cap \param{max\_trees}.
    \item \textbf{PLATEAU}: the proposed triplet-based plateau search with $T_0=100$, where the central point of the triplet follows the geometric progression $B \approx T_0\,\textnormal{sf}^{\,j}$, $j=0,1,\ldots$. In contrast to ES, the search does not restart from the leftmost ladder point at every trial, but adaptively updates the working triplet across trials. As in ES, no explicit upper bound $T_{\max}$ is used; instead, a large safety cap \param{max\_trees} is imposed.
    \item \textbf{TPE\_Tmin--PLT}: a decoupled counterpart of PLATEAU. 
    First, TPE tunes the remaining Random Forest hyperparameters with \param{n\_estimators} fixed at $T_{\min}=100$. Second, the plateau procedure is applied under this fixed hyperparameter configuration to adaptively select the number of trees.
\end{itemize}
The Hyperband ladder and the upper bound $T_{\max}=2565$ are derived from the same geometric progression with starting point  $T_0=T_{\min}=100$ and scale factor $\textnormal{sf}=1.5$, and we set $T_{\max}$ to the first ladder value that exceeds $2000$.

In our experimental infrastructure, a single hyperparameter-tuning run for Random Forest is specified by the following settings:
\begin{itemize}
    \item dataset,
    \item \param{tune\_criterion} $\in$ \{``Yes'', ``No''\},
    \item \param{only\_depth} $\in$ \{``Yes'', ``No''\}, 
    \item algorithm $\in$ \{TPE, TPE\_Tmin--Tmax,
    HB, ES, PLATEAU, TPE\_Tmin--PLT\},
    \item number of trials $n_{\text{trials}}\in \{40, 120\}$,
    \item random seed (20 distinct values).
\end{itemize}
For the PLATEAU algorithm, we additionally consider $\textnormal{sf}\in\{1.5,\,2\}$ and $\varepsilon \in \{10^{-3},\, 3\times 10^{-3},\, 5\times 10^{-3},\, 7\times 10^{-3}, 9\times 10^{-3}\}$.

Unless stated otherwise, each run used one physical core (2 vCPUs) of an Intel Xeon Gold 6230 running at 2.10 GHz base frequency; for \dataset{Dorothea}, each run used two physical cores (4 vCPUs) due to its substantially higher computational cost.
For reproducibility, the 20 repeated runs for each configuration were generated using an explicit fixed-seed protocol, so that the same sequence of random seeds can be reproduced across reruns.
Unless stated otherwise, each run of the plateau-search algorithm includes the revisit phase.
\begin{figure*}[t]
\centering
\includegraphics[width=\linewidth]{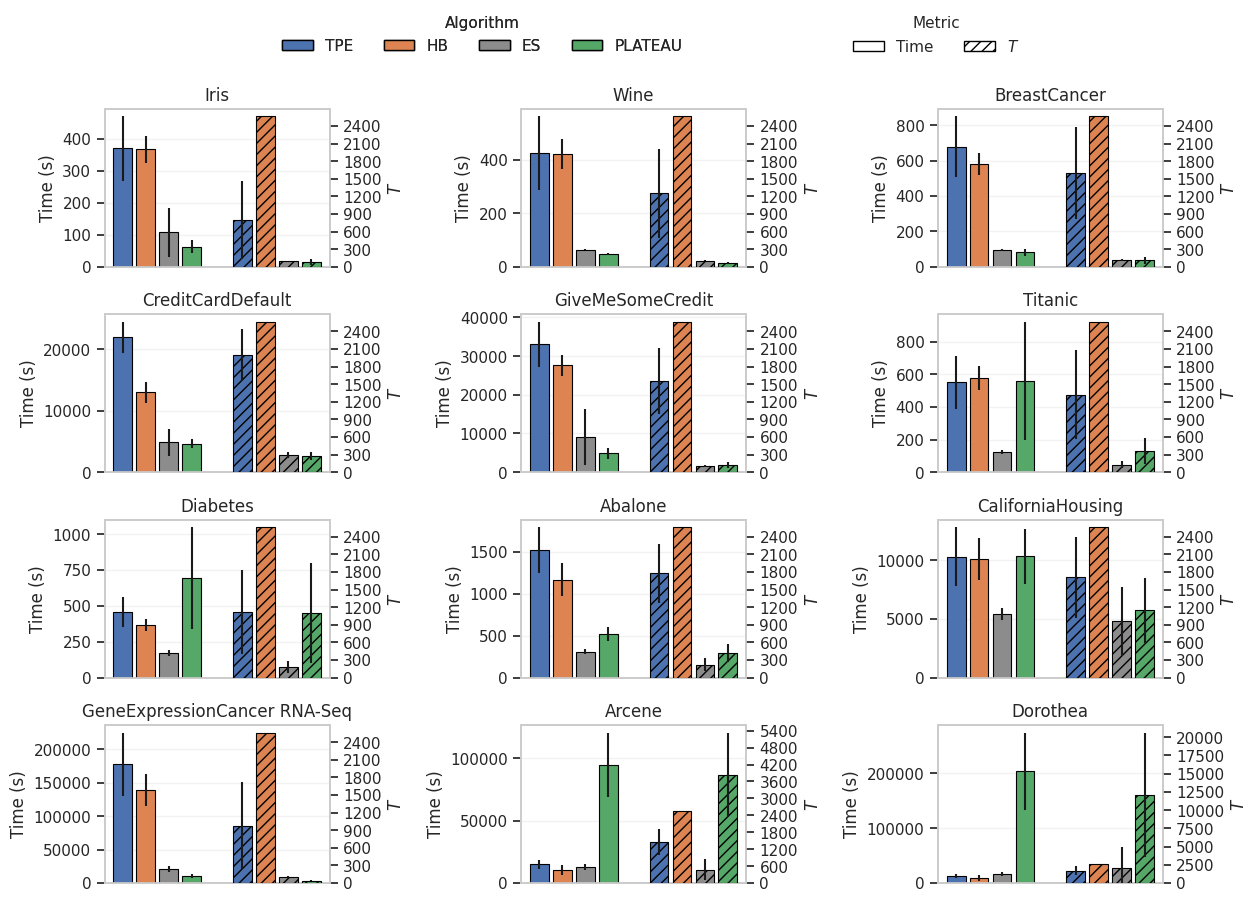}
\caption{Runtime and selected number of trees $T$ for TPE, HB, ES, and the proposed PLATEAU search across all datasets.
Bars show the mean over 20 random seeds and error bars indicate $\pm1$ standard deviation.
For TPE and HB, the tree-count search uses the fixed upper bound $T_{\max}=2565$; for ES and PLATEAU, the stopping tolerance is set to $\varepsilon=\varepsilon_{\min}$.
All experiments use \textsc{tune\_criterion}=``No'', \textsc{only\_depth}=``No'', $\text{sf}=1.5$, and $n_{\text{trials}}=120$.
\dataset{Dorothea} was run with two physical cores per run instead of one because of its higher computational cost.}
\label{fig:bars}  
\end{figure*}
\begin{table*}[t]
\centering
\caption{
Effect of the trial budget ($n_{\text{trials}}=120$ vs.\ $40$) on the best achieved OOB score, with $\text{sf}=1.5$ and $\varepsilon=\varepsilon_{\min}$.
For each dataset and configuration (algorithm, \textsc{tune\_criterion} =``Yes/No'', \textsc{only\_depth}=``Yes/No''), we compare two groups of 20 runs with different random seeds.
Each cell reports the one-sided $p$-value and the corresponding effect size (Cohen's $d$ or Cliff's $\delta$, depending on normality).
Boldface indicates statistically significant and practically non-negligible differences.
}
\label{tab:n_trials_120_vs_40}
\setlength{\tabcolsep}{1pt}
\begin{tabular}{>{\centering\arraybackslash}m{2.8cm}|>{\centering\arraybackslash}m{1.6cm}|>{\centering\arraybackslash}m{1.6cm}|>{\centering\arraybackslash}m{1.6cm}|>{\centering\arraybackslash}m{1.6cm}|>{\centering\arraybackslash}m{1.6cm}|>{\centering\arraybackslash}m{1.6cm}|>{\centering\arraybackslash}m{1.6cm}|>{\centering\arraybackslash}m{1.6cm}}
\toprule
$\mathbf{n_{trials}:}$ & \multicolumn{8}{c}{\textbf{120 vs 40}} \\
\midrule
\textbf{algorithm:} & \multicolumn{4}{c|}{\textbf{TPE}} & \multicolumn{4}{c}{\textbf{PLATEAU}} \\
\midrule
\textbf{tune criterion:} & \multicolumn{2}{c|}{\textbf{YES}} & \multicolumn{2}{c|}{\textbf{NO}} & \multicolumn{2}{c|}{\textbf{YES}} & \multicolumn{2}{c}{\textbf{NO}} \\
\midrule
\textbf{only depth:} & \textbf{YES} & \textbf{NO} & \textbf{YES} & \textbf{NO} & \textbf{YES} & \textbf{NO} & \textbf{YES} & \textbf{NO} \\
\midrule
\makecell*{Iris} & \makecell*{$\mathbf{8.5\times10^{-4}}$\\$\delta=\mathbf{0.58}$} & \makecell*{$\mathbf{1.2\times10^{-3}}$\\$\delta=\mathbf{0.56}$} & \makecell*{$\mathbf{1.2\times10^{-3}}$\\$\delta=\mathbf{0.56}$} & \makecell*{$\mathbf{4.1\times10^{-4}}$\\$d=\mathbf{1.19}$} & \makecell*{$\mathbf{6.0\times10^{-3}}$\\$d=\mathbf{0.88}$} & \makecell*{$\mathbf{1.3\times10^{-4}}$\\$d=\mathbf{1.33}$} & \makecell*{$\mathbf{2.0\times10^{-3}}$\\$d=\mathbf{1.00}$} & \makecell*{$\mathbf{4.4\times10^{-4}}$\\$d=\mathbf{1.19}$} \\
\makecell*{Wine} & \makecell*{$\mathbf{8.0\times10^{-5}}$\\$\delta=\mathbf{0.66}$} & \makecell*{$\mathbf{3.5\times10^{-6}}$\\$\delta=\mathbf{0.83}$} & \makecell*{$\mathbf{2.0\times10^{-2}}$\\$\delta=\mathbf{0.38}$} & \makecell*{$\mathbf{8.8\times10^{-6}}$\\$d=\mathbf{1.61}$} & \makecell*{$\mathbf{1.2\times10^{-2}}$\\$d=\mathbf{0.79}$} & \makecell*{$\mathbf{3.4\times10^{-4}}$\\$\delta=\mathbf{0.63}$} & \makecell*{$\mathbf{6.7\times10^{-5}}$\\$\delta=\mathbf{0.71}$} & \makecell*{$\mathbf{5.0\times10^{-7}}$\\$d=\mathbf{1.96}$} \\
\makecell*{BreastCancer} & \makecell*{$\mathbf{1.0\times10^{-3}}$\\$\delta=\mathbf{0.57}$} & \makecell*{$\mathbf{1.6\times10^{-6}}$\\$d=\mathbf{1.95}$} & \makecell*{$\mathbf{1.8\times10^{-5}}$\\$d=\mathbf{1.54}$} & \makecell*{$\mathbf{3.2\times10^{-7}}$\\$d=\mathbf{2.05}$} & \makecell*{$\mathbf{9.3\times10^{-4}}$\\$\delta=\mathbf{0.58}$} & \makecell*{$\mathbf{4.5\times10^{-5}}$\\$d=\mathbf{1.44}$} & \makecell*{$\mathbf{1.0\times10^{-3}}$\\$d=\mathbf{1.13}$} & \makecell*{$\mathbf{1.0\times10^{-5}}$\\$d=\mathbf{1.63}$} \\
\makecell*{CreditCardDefault} & \makecell*{$\mathbf{1.9\times10^{-4}}$\\$\delta=\mathbf{0.66}$} & \makecell*{$\mathbf{5.9\times10^{-6}}$\\$d=\mathbf{1.71}$} & \makecell*{$\mathbf{6.1\times10^{-4}}$\\$d=\mathbf{1.21}$} & \makecell*{$\mathbf{3.9\times10^{-5}}$\\$d=\mathbf{1.58}$} & \makecell*{$\mathbf{6.8\times10^{-6}}$\\$d=\mathbf{1.69}$} & \makecell*{$\mathbf{7.6\times10^{-6}}$\\$d=\mathbf{1.68}$} & \makecell*{$\mathbf{2.8\times10^{-6}}$\\$d=\mathbf{1.74}$} & \makecell*{$\mathbf{4.4\times10^{-4}}$\\$d=\mathbf{1.19}$} \\
\makecell*{GiveMeSomeCredit} & \makecell*{$\mathbf{2.3\times10^{-6}}$\\$\delta=\mathbf{0.85}$} & \makecell*{$\mathbf{4.9\times10^{-6}}$\\$\delta=\mathbf{0.82}$} & \makecell*{$\mathbf{1.8\times10^{-7}}$\\$d=\mathbf{2.03}$} & \makecell*{$\mathbf{3.7\times10^{-5}}$\\$\delta=\mathbf{0.74}$} & \makecell*{$\mathbf{3.0\times10^{-2}}$\\$\delta=\mathbf{0.35}$} & \makecell*{$\mathbf{4.8\times10^{-4}}$\\$\delta=\mathbf{0.61}$} & \makecell*{$\mathbf{8.3\times10^{-3}}$\\$\delta=\mathbf{0.45}$} & \makecell*{$\mathbf{4.5\times10^{-3}}$\\$d=\mathbf{0.93}$} \\
\makecell*{Titanic} & \makecell*{$\mathbf{4.2\times10^{-4}}$\\$d=\mathbf{1.18}$} & \makecell*{$\mathbf{8.6\times10^{-7}}$\\$\delta=\mathbf{0.88}$} & \makecell*{$\mathbf{1.0\times10^{-4}}$\\$\delta=\mathbf{0.68}$} & \makecell*{$\mathbf{3.5\times10^{-5}}$\\$\delta=\mathbf{0.73}$} & \makecell*{$\mathbf{1.3\times10^{-3}}$\\$\delta=\mathbf{0.55}$} & \makecell*{$\mathbf{2.8\times10^{-6}}$\\$\delta=\mathbf{0.84}$} & \makecell*{$1.1\times10^{-1}$\\$\delta=0.23$} & \makecell*{$\mathbf{4.4\times10^{-4}}$\\$\delta=\mathbf{0.61}$} \\
\makecell*{Diabetes} & \makecell*{$\mathbf{3.7\times10^{-5}}$\\$\delta=\mathbf{-0.73}$} & \makecell*{$\mathbf{3.1\times10^{-5}}$\\$d=\mathbf{-1.51}$} & \makecell*{$\mathbf{2.2\times10^{-5}}$\\$\delta=\mathbf{-0.76}$} & \makecell*{$\mathbf{5.1\times10^{-5}}$\\$d=\mathbf{-1.46}$} & \makecell*{$\mathbf{1.7\times10^{-3}}$\\$\delta=\mathbf{-0.54}$} & \makecell*{$\mathbf{7.8\times10^{-4}}$\\$d=\mathbf{-1.19}$} & \makecell*{$\mathbf{8.7\times10^{-6}}$\\$d=\mathbf{-1.72}$} & \makecell*{$\mathbf{1.4\times10^{-3}}$\\$\delta=\mathbf{-0.56}$} \\
\makecell*{Abalone} & \makecell*{$\mathbf{8.1\times10^{-6}}$\\$d=\mathbf{-1.65}$} & \makecell*{$\mathbf{1.0\times10^{-6}}$\\$d=\mathbf{-1.91}$} & \makecell*{$\mathbf{3.7\times10^{-4}}$\\$d=\mathbf{-1.24}$} & \makecell*{$\mathbf{6.3\times10^{-7}}$\\$d=\mathbf{-1.99}$} & \makecell*{$\mathbf{2.3\times10^{-6}}$\\$\delta=\mathbf{-0.85}$} & \makecell*{$\mathbf{2.2\times10^{-6}}$\\$d=\mathbf{-1.96}$} & \makecell*{$\mathbf{8.0\times10^{-5}}$\\$\delta=\mathbf{-0.70}$} & \makecell*{$\mathbf{2.9\times10^{-6}}$\\$d=\mathbf{-1.79}$} \\
\makecell*{CaliforniaHousing} & \makecell*{$\mathbf{1.7\times10^{-5}}$\\$d=\mathbf{-1.62}$} & \makecell*{$\mathbf{4.9\times10^{-7}}$\\$\delta=\mathbf{-0.91}$} & \makecell*{$\mathbf{9.3\times10^{-5}}$\\$d=\mathbf{-1.42}$} & \makecell*{$\mathbf{4.3\times10^{-6}}$\\$\delta=\mathbf{-0.83}$} & \makecell*{$\mathbf{2.3\times10^{-5}}$\\$d=\mathbf{-1.56}$} & \makecell*{$\mathbf{1.3\times10^{-6}}$\\$\delta=\mathbf{-0.87}$} & \makecell*{$\mathbf{1.8\times10^{-4}}$\\$d=\mathbf{-1.34}$} & \makecell*{$\mathbf{2.4\times10^{-4}}$\\$\delta=\mathbf{-0.65}$} \\
\makecell*{GeneExpressionCancer \\ RNA-Seq} & \makecell*{$\mathbf{3.9\times10^{-3}}$\\$\delta=\mathbf{0.44}$} & \makecell*{$\mathbf{5.6\times10^{-4}}$\\$\delta=\mathbf{0.54}$} & \makecell*{$\mathbf{4.3\times10^{-3}}$\\$\delta=\mathbf{0.43}$} & \makecell*{$\mathbf{5.2\times10^{-3}}$\\$\delta=\mathbf{0.43}$} & \makecell*{$\mathbf{1.4\times10^{-2}}$\\$\delta=\mathbf{0.38}$} & \makecell*{$\mathbf{8.8\times10^{-3}}$\\$\delta=\mathbf{0.41}$} & \makecell*{$\mathbf{1.4\times10^{-3}}$\\$\delta=\mathbf{0.51}$} & \makecell*{$\mathbf{3.1\times10^{-3}}$\\$\delta=\mathbf{0.46}$} \\
\makecell*{Arcene} & \makecell*{$\mathbf{3.8\times10^{-4}}$\\$d=\mathbf{1.22}$} & \makecell*{$\mathbf{4.3\times10^{-9}}$\\$d=\mathbf{2.52}$} & \makecell*{$\mathbf{6.1\times10^{-4}}$\\$\delta=\mathbf{0.60}$} & \makecell*{$\mathbf{6.0\times10^{-7}}$\\$\delta=\mathbf{0.90}$} & \makecell*{$\mathbf{2.2\times10^{-2}}$\\$\delta=\mathbf{0.37}$} & \makecell*{$\mathbf{2.0\times10^{-7}}$\\$\delta=\mathbf{0.94}$} & \makecell*{$1.1\times10^{-1}$\\$d=0.42$} & \makecell*{$\mathbf{5.2\times10^{-7}}$\\$\delta=\mathbf{0.90}$} \\
\makecell*{Dorothea} & \makecell*{$\mathbf{3.3\times10^{-5}}$\\$d=\mathbf{1.50}$} & \makecell*{$\mathbf{2.2\times10^{-6}}$\\$d=\mathbf{1.78}$} & \makecell*{$\mathbf{6.4\times10^{-3}}$\\$d=\mathbf{0.86}$} & \makecell*{$\mathbf{2.2\times10^{-9}}$\\$d=\mathbf{2.59}$} & \makecell*{$\mathbf{2.1\times10^{-2}}$\\$d=\mathbf{0.71}$} & \makecell*{$\mathbf{1.2\times10^{-7}}$\\$d=\mathbf{2.11}$} & \makecell*{$\mathbf{1.5\times10^{-2}}$\\$\delta=\mathbf{0.40}$} & \makecell*{$\mathbf{2.7\times10^{-6}}$\\$d=\mathbf{1.81}$} \\
\bottomrule
\end{tabular}

\end{table*}
\begin{table*}[t]
\centering
\caption{Impact of enabling \textsc{tune\_criterion} and restricting the search to \textsc{only\_depth} (``Yes'' vs.\ ``No'') at $\text{sf}=1.5$, $n_{\text{trials}}=120$.
For each dataset and algorithm, we compare two groups of 20 runs with different random seeds using two-sided tests.
Each cell reports the $p$-value and the corresponding effect size (Cohen's $d$ or Cliff's $\delta$).
Boldface indicates statistically significant and practically non-negligible differences.}
\label{tab:criterion_depth_yes_vs_no}
\setlength{\tabcolsep}{1pt}
\begin{tabular}{>{\centering\arraybackslash}m{2.8cm}|>{\centering\arraybackslash}m{1.7cm}|>{\centering\arraybackslash}m{1.7cm}|>{\centering\arraybackslash}m{1.7cm}|>{\centering\arraybackslash}m{1.7cm}|>{\centering\arraybackslash}m{1.7cm}|>{\centering\arraybackslash}m{1.7cm}|>{\centering\arraybackslash}m{1.7cm}|>{\centering\arraybackslash}m{1.7cm}}
\toprule
\textbf{tune criterion:} & \multicolumn{4}{c|}{\textbf{YES vs NO}} & \textbf{YES} & \textbf{NO} & \textbf{YES} & \textbf{NO} \\
\midrule
\textbf{only depth:} & \textbf{YES} & \textbf{NO} & \textbf{YES} & \textbf{NO} & \multicolumn{4}{c}{\textbf{YES vs NO}} \\
\midrule
\textbf{algorithm:} & \multicolumn{2}{c|}{\textbf{TPE}} & \multicolumn{2}{c|}{\textbf{PLATEAU}} & \multicolumn{2}{c|}{\textbf{TPE}} & \multicolumn{2}{c}{\textbf{PLATEAU}} \\
\midrule
\makecell*{Iris} & \makecell*{$5.3\times10^{-1}$\\$\delta=-0.12$} & \makecell*{$2.8\times10^{-1}$\\$\delta=-0.20$} & \makecell*{$1.8\times10^{-1}$\\$d=-0.45$} & \makecell*{$5.2\times10^{-1}$\\$d=-0.21$} & \makecell*{$\mathbf{1.0\times10^{-2}}$\\$\delta=\mathbf{-0.47}$} & \makecell*{$\mathbf{2.3\times10^{-2}}$\\$\delta=\mathbf{-0.42}$} & \makecell*{$\mathbf{1.0\times10^{-2}}$\\$d=\mathbf{-0.88}$} & \makecell*{$8.4\times10^{-2}$\\$d=\mathbf{-0.59}$} \\
\makecell*{Wine} & \makecell*{$\mathbf{4.3\times10^{-5}}$\\$\delta=\mathbf{0.73}$} & \makecell*{$\mathbf{3.0\times10^{-3}}$\\$d=\mathbf{1.08}$} & \makecell*{$5.8\times10^{-1}$\\$\delta=-0.10$} & \makecell*{$6.7\times10^{-2}$\\$\delta=\mathbf{0.34}$} & \makecell*{$\mathbf{1.4\times10^{-7}}$\\$\delta=\mathbf{0.96}$} & \makecell*{$\mathbf{6.5\times10^{-6}}$\\$\delta=\mathbf{0.83}$} & \makecell*{$7.4\times10^{-1}$\\$\delta=0.06$} & \makecell*{$\mathbf{2.6\times10^{-3}}$\\$\delta=\mathbf{0.56}$} \\
\makecell*{BreastCancer} & \makecell*{$\mathbf{1.7\times10^{-7}}$\\$\delta=\mathbf{0.97}$} & \makecell*{$\mathbf{2.3\times10^{-9}}$\\$d=\mathbf{2.63}$} & \makecell*{$\mathbf{5.1\times10^{-4}}$\\$\delta=\mathbf{0.65}$} & \makecell*{$7.1\times10^{-2}$\\$d=\mathbf{0.61}$} & \makecell*{$\mathbf{1.9\times10^{-5}}$\\$\delta=\mathbf{0.79}$} & \makecell*{$\mathbf{1.1\times10^{-7}}$\\$d=\mathbf{2.29}$} & \makecell*{$\mathbf{3.5\times10^{-6}}$\\$\delta=\mathbf{0.86}$} & \makecell*{$\mathbf{1.9\times10^{-5}}$\\$d=\mathbf{1.68}$} \\
\makecell*{CreditCardDefault} & \makecell*{$\mathbf{9.9\times10^{-29}}$\\$d=\mathbf{13.05}$} & \makecell*{$\mathbf{1.6\times10^{-8}}$\\$d=\mathbf{2.57}$} & \makecell*{$\mathbf{2.9\times10^{-18}}$\\$d=\mathbf{5.27}$} & \makecell*{$\mathbf{4.0\times10^{-3}}$\\$d=\mathbf{1.04}$} & \makecell*{$\mathbf{6.8\times10^{-37}}$\\$d=\mathbf{-17.51}$} & \makecell*{$\mathbf{1.0\times10^{-51}}$\\$d=\mathbf{-44.03}$} & \makecell*{$\mathbf{1.0\times10^{-22}}$\\$d=\mathbf{-7.06}$} & \makecell*{$\mathbf{3.3\times10^{-21}}$\\$d=\mathbf{-7.63}$} \\
\makecell*{GiveMeSomeCredit} & \makecell*{$\mathbf{6.8\times10^{-8}}$\\$\delta=\mathbf{1.00}$} & \makecell*{$2.9\times10^{-1}$\\$\delta=-0.20$} & \makecell*{$\mathbf{9.5\times10^{-8}}$\\$d=\mathbf{2.36}$} & \makecell*{$2.6\times10^{-1}$\\$d=-0.37$} & \makecell*{$\mathbf{6.8\times10^{-8}}$\\$\delta=\mathbf{-1.00}$} & \makecell*{$\mathbf{6.8\times10^{-8}}$\\$\delta=\mathbf{-1.00}$} & \makecell*{$\mathbf{1.0\times10^{-25}}$\\$d=\mathbf{-8.62}$} & \makecell*{$\mathbf{6.2\times10^{-23}}$\\$d=\mathbf{-11.17}$} \\
\makecell*{Titanic} & \makecell*{$6.4\times10^{-1}$\\$\delta=0.09$} & \makecell*{$\mathbf{2.4\times10^{-3}}$\\$\delta=\mathbf{0.55}$} & \makecell*{$3.4\times10^{-1}$\\$\delta=0.17$} & \makecell*{$\mathbf{1.1\times10^{-3}}$\\$\delta=\mathbf{0.60}$} & \makecell*{$\mathbf{4.2\times10^{-2}}$\\$\delta=\mathbf{-0.37}$} & \makecell*{$4.2\times10^{-1}$\\$\delta=0.15$} & \makecell*{$\mathbf{2.5\times10^{-3}}$\\$\delta=\mathbf{-0.55}$} & \makecell*{$6.9\times10^{-1}$\\$\delta=0.07$} \\
\makecell*{Diabetes} & \makecell*{$2.3\times10^{-1}$\\$\delta=0.23$} & \makecell*{$7.4\times10^{-1}$\\$d=-0.12$} & \makecell*{$6.9\times10^{-1}$\\$\delta=0.08$} & \makecell*{$\mathbf{4.4\times10^{-2}}$\\$\delta=\mathbf{0.37}$} & \makecell*{$\mathbf{9.6\times10^{-12}}$\\$d=\mathbf{3.22}$} & \makecell*{$\mathbf{1.0\times10^{-6}}$\\$\delta=\mathbf{0.91}$} & \makecell*{$\mathbf{6.8\times10^{-8}}$\\$\delta=\mathbf{1.00}$} & \makecell*{$\mathbf{6.8\times10^{-8}}$\\$\delta=\mathbf{1.00}$} \\
\makecell*{Abalone} & \makecell*{$6.4\times10^{-1}$\\$d=0.15$} & \makecell*{$1.2\times10^{-1}$\\$d=\mathbf{0.51}$} & \makecell*{$3.9\times10^{-1}$\\$\delta=0.16$} & \makecell*{$2.5\times10^{-1}$\\$d=-0.39$} & \makecell*{$\mathbf{2.4\times10^{-39}}$\\$d=\mathbf{19.95}$} & \makecell*{$\mathbf{9.7\times10^{-35}}$\\$d=\mathbf{19.83}$} & \makecell*{$\mathbf{4.6\times10^{-29}}$\\$d=\mathbf{13.38}$} & \makecell*{$\mathbf{6.8\times10^{-8}}$\\$\delta=\mathbf{1.00}$} \\
\makecell*{CaliforniaHousing} & \makecell*{$7.4\times10^{-1}$\\$d=0.11$} & \makecell*{$\mathbf{2.3\times10^{-3}}$\\$\delta=\mathbf{-0.56}$} & \makecell*{$8.4\times10^{-2}$\\$d=\mathbf{0.58}$} & \makecell*{$\mathbf{5.0\times10^{-2}}$\\$\delta=\mathbf{-0.37}$} & \makecell*{$6.0\times10^{-1}$\\$\delta=0.10$} & \makecell*{$\mathbf{1.4\times10^{-5}}$\\$\delta=\mathbf{-0.81}$} & \makecell*{$\mathbf{1.2\times10^{-2}}$\\$\delta=\mathbf{-0.46}$} & \makecell*{$\mathbf{3.9\times10^{-7}}$\\$\delta=\mathbf{-0.94}$} \\
\makecell*{GeneExpressionCancer \\ RNA-Seq} & \makecell*{$\mathbf{3.6\times10^{-2}}$\\$\delta=\mathbf{-0.33}$} & \makecell*{$9.9\times10^{-2}$\\$\delta=0.27$} & \makecell*{$1.0\times10^{-1}$\\$\delta=-0.27$} & \makecell*{$6.7\times10^{-1}$\\$\delta=-0.07$} & \makecell*{$\mathbf{2.2\times10^{-2}}$\\$\delta=\mathbf{-0.37}$} & \makecell*{$1.5\times10^{-1}$\\$\delta=0.24$} & \makecell*{$2.5\times10^{-1}$\\$\delta=-0.19$} & \makecell*{$9.3\times10^{-1}$\\$\delta=0.02$} \\
\makecell*{Arcene} & \makecell*{$8.4\times10^{-1}$\\$d=0.08$} & \makecell*{$9.4\times10^{-1}$\\$d=0.02$} & \makecell*{$4.8\times10^{-1}$\\$d=-0.24$} & \makecell*{$3.2\times10^{-1}$\\$d=-0.33$} & \makecell*{$\mathbf{4.2\times10^{-7}}$\\$d=\mathbf{2.08}$} & \makecell*{$\mathbf{4.1\times10^{-9}}$\\$d=\mathbf{2.52}$} & \makecell*{$\mathbf{1.1\times10^{-9}}$\\$d=\mathbf{2.92}$} & \makecell*{$\mathbf{6.7\times10^{-9}}$\\$d=\mathbf{2.77}$} \\
\makecell*{Dorothea} & \makecell*{$\mathbf{4.1\times10^{-4}}$\\$d=\mathbf{1.28}$} & \makecell*{$\mathbf{2.3\times10^{-2}}$\\$d=\mathbf{0.78}$} & \makecell*{$\mathbf{4.2\times10^{-5}}$\\$\delta=\mathbf{0.76}$} & \makecell*{$4.5\times10^{-1}$\\$d=0.25$} & \makecell*{$\mathbf{6.3\times10^{-6}}$\\$d=\mathbf{1.94}$} & \makecell*{$\mathbf{4.2\times10^{-8}}$\\$d=\mathbf{2.53}$} & \makecell*{$\mathbf{2.3\times10^{-6}}$\\$d=\mathbf{2.14}$} & \makecell*{$\mathbf{5.2\times10^{-6}}$\\$\delta=\mathbf{0.85}$} \\
\bottomrule
\end{tabular}

\end{table*}
\begin{table*}[t]
\centering
\caption{
Supplementary ablation and sensitivity analysis.
The table compares joint vs.\ decoupled tuning (TPE and PLATEAU), pruning aggressiveness (HB vs.\ PLATEAU), the computational cost of HB vs.\ TPE (wall-clock time and total number of trees built during training), and the sensitivity of PLATEAU to the scale factor  $\text{sf}$ (1.5 vs.\ 2.0).
Unless stated otherwise: 
\textsc{tune\_criterion} =``No'', \textsc{only\_depth}=``No'', $\text{sf}=1.5$, $\varepsilon=\varepsilon_{\min}$, $n_{\text{trials}}=120$.
For each dataset and comparison, we evaluate two groups of 20 runs with different random seeds using two-sided tests.
Each cell reports the $p$-value and the corresponding effect size (Cohen's $d$ or Cliff's $\delta$, depending on normality).
Boldface indicates statistically significant and practically non-negligible differences.
}
\label{tab:joint_hb_sf}
\setlength{\tabcolsep}{1pt}
\begin{tabular}{>{\centering\arraybackslash}m{2.7cm}|>{\centering\arraybackslash}m{2.2cm}|>{\centering\arraybackslash}m{2.2cm}|>{\centering\arraybackslash}m{1.8cm}|>{\centering\arraybackslash}m{1.8cm}|>{\centering\arraybackslash}m{1.8cm}|>{\centering\arraybackslash}m{1.8cm}|>{\centering\arraybackslash}m{1.8cm}}
\toprule
\textbf{algorithm:} 
& \makecell{\textbf{score:} \\ \textbf{TPE\_\mbox{Tmin-Tmax}} \\ \textbf{vs TPE}} 
& \makecell{\textbf{score:} \\ \textbf{TPE\_\mbox{Tmin-PLT}} \\ \textbf{vs PLATEAU}}
& \makecell{\textbf{trials pruned:} \\ \textbf{HB vs} \\ \textbf{PLATEAU}} 
& \makecell{\textbf{time:} \\ \textbf{HB vs TPE} \\ {\ }} 
& \makecell{\textbf{trees built:} \\ \textbf{HB vs TPE} \\ {\ }} 
& \makecell{\textbf{time:} \\ $\mathbf{sf=1.5}$ \\ \textbf{vs} $\mathbf{sf=2.0}$} 
& \makecell{\textbf{trees built:} \\ $\mathbf{sf=1.5}$ \\ \textbf{vs} $\mathbf{sf=2.0}$} \\
\midrule
\makecell*{Iris} & \makecell*{$\mathbf{6.0\times10^{-12}}$\\$d=\mathbf{-3.38}$} & \makecell*{$1.9\times10^{-1}$\\$d=0.44$} & \makecell*{$\mathbf{1.3\times10^{-3}}$\\$d=\mathbf{1.17}$} & \makecell*{$9.1\times10^{-1}$\\$d=-0.03$} & \makecell*{$\mathbf{1.7\times10^{-3}}$\\$d=\mathbf{-1.17}$} & \makecell*{$\mathbf{1.4\times10^{-6}}$\\$\delta=\mathbf{-0.89}$} & \makecell*{$\mathbf{5.2\times10^{-7}}$\\$\delta=\mathbf{-0.93}$} \\
\makecell*{Wine} & \makecell*{$\mathbf{2.1\times10^{-8}}$\\$d=\mathbf{-2.72}$} & \makecell*{$5.7\times10^{-1}$\\$d=0.18$} & \makecell*{$9.9\times10^{-2}$\\$d=\mathbf{0.56}$} & \makecell*{$9.0\times10^{-1}$\\$d=-0.03$} & \makecell*{$\mathbf{6.0\times10^{-3}}$\\$d=\mathbf{-1.00}$} & \makecell*{$\mathbf{1.2\times10^{-14}}$\\$d=\mathbf{-4.80}$} & \makecell*{$\mathbf{7.8\times10^{-15}}$\\$d=\mathbf{-5.03}$} \\
\makecell*{BreastCancer} & \makecell*{$\mathbf{5.2\times10^{-10}}$\\$d=\mathbf{-3.03}$} & \makecell*{$5.3\times10^{-2}$\\$d=\mathbf{0.65}$} & \makecell*{$1.1\times10^{-1}$\\$d=\mathbf{0.53}$} & \makecell*{$7.2\times10^{-2}$\\$\delta=\mathbf{-0.33}$} & \makecell*{$\mathbf{4.7\times10^{-5}}$\\$d=\mathbf{-1.70}$} & \makecell*{$\mathbf{1.7\times10^{-7}}$\\$\delta=\mathbf{-0.97}$} & \makecell*{$\mathbf{9.2\times10^{-8}}$\\$\delta=\mathbf{-0.99}$} \\
\makecell*{CreditCardDefault} & \makecell*{$\mathbf{3.0\times10^{-7}}$\\$\delta=\mathbf{-0.95}$} & \makecell*{$8.0\times10^{-1}$\\$\delta=0.05$} & \makecell*{$\mathbf{1.1\times10^{-3}}$\\$d=\mathbf{1.19}$} & \makecell*{$\mathbf{2.0\times10^{-14}}$\\$d=\mathbf{-4.31}$} & \makecell*{$\mathbf{9.2\times10^{-8}}$\\$\delta=\mathbf{-0.99}$} & \makecell*{$\mathbf{6.8\times10^{-8}}$\\$\delta=\mathbf{-1.00}$} & \makecell*{$\mathbf{6.8\times10^{-8}}$\\$\delta=\mathbf{-1.00}$} \\
\makecell*{GiveMeSomeCredit} & \makecell*{$\mathbf{2.0\times10^{-5}}$\\$\delta=\mathbf{-0.79}$} & \makecell*{$\mathbf{1.3\times10^{-2}}$\\$d=\mathbf{-0.85}$} & \makecell*{$\mathbf{1.3\times10^{-2}}$\\$d=\mathbf{-0.86}$} & \makecell*{$\mathbf{1.0\times10^{-3}}$\\$d=\mathbf{-1.23}$} & \makecell*{$\mathbf{5.4\times10^{-5}}$\\$d=\mathbf{-1.62}$} & \makecell*{$\mathbf{2.2\times10^{-7}}$\\$\delta=\mathbf{-0.96}$} & \makecell*{$\mathbf{6.8\times10^{-8}}$\\$\delta=\mathbf{-1.00}$} \\
\makecell*{Titanic} & \makecell*{$\mathbf{4.2\times10^{-5}}$\\$\delta=\mathbf{-0.76}$} & \makecell*{$\mathbf{2.1\times10^{-2}}$\\$\delta=\mathbf{0.42}$} & \makecell*{$\mathbf{1.3\times10^{-2}}$\\$d=\mathbf{0.86}$} & \makecell*{$5.3\times10^{-1}$\\$d=0.22$} & \makecell*{$\mathbf{4.4\times10^{-3}}$\\$d=\mathbf{-1.04}$} & \makecell*{$\mathbf{3.5\times10^{-6}}$\\$\delta=\mathbf{-0.86}$} & \makecell*{$\mathbf{2.7\times10^{-6}}$\\$\delta=\mathbf{-0.87}$} \\
\makecell*{Diabetes} & \makecell*{$\mathbf{6.8\times10^{-8}}$\\$\delta=\mathbf{1.00}$} & \makecell*{$\mathbf{4.7\times10^{-2}}$\\$\delta=\mathbf{0.37}$} & \makecell*{$\mathbf{7.3\times10^{-5}}$\\$d=\mathbf{1.50}$} & \makecell*{$\mathbf{2.1\times10^{-3}}$\\$d=\mathbf{-1.16}$} & \makecell*{$\mathbf{4.9\times10^{-7}}$\\$d=\mathbf{-2.34}$} & \makecell*{$\mathbf{1.0\times10^{-6}}$\\$\delta=\mathbf{-0.90}$} & \makecell*{$\mathbf{1.0\times10^{-6}}$\\$\delta=\mathbf{-0.90}$} \\
\makecell*{Abalone} & \makecell*{$\mathbf{6.0\times10^{-9}}$\\$d=\mathbf{3.02}$} & \makecell*{$7.0\times10^{-1}$\\$d=-0.14$} & \makecell*{$\mathbf{1.5\times10^{-3}}$\\$d=\mathbf{1.14}$} & \makecell*{$\mathbf{6.5\times10^{-5}}$\\$d=\mathbf{-1.51}$} & \makecell*{$\mathbf{4.5\times10^{-9}}$\\$d=\mathbf{-2.69}$} & \makecell*{$\mathbf{3.9\times10^{-11}}$\\$d=\mathbf{-3.78}$} & \makecell*{$\mathbf{5.8\times10^{-11}}$\\$d=\mathbf{-3.78}$} \\
\makecell*{CaliforniaHousing} & \makecell*{$\mathbf{2.3\times10^{-5}}$\\$\delta=\mathbf{0.79}$} & \makecell*{$4.4\times10^{-1}$\\$\delta=-0.14$} & \makecell*{$\mathbf{9.0\times10^{-3}}$\\$\delta=\mathbf{0.49}$} & \makecell*{$7.4\times10^{-1}$\\$d=-0.11$} & \makecell*{$\mathbf{1.1\times10^{-2}}$\\$d=\mathbf{-0.90}$} & \makecell*{$\mathbf{1.1\times10^{-7}}$\\$\delta=\mathbf{-0.99}$} & \makecell*{$\mathbf{6.8\times10^{-8}}$\\$\delta=\mathbf{-1.00}$} \\
\makecell*{GeneExpressionCancer \\ RNA-Seq} & \makecell*{$\mathbf{1.1\times10^{-5}}$\\$\delta=\mathbf{-0.72}$} & \makecell*{$3.0\times10^{-1}$\\$\delta=0.18$} & \makecell*{$\mathbf{1.6\times10^{-3}}$\\$d=\mathbf{-1.18}$} & \makecell*{$\mathbf{3.6\times10^{-3}}$\\$d=\mathbf{-1.07}$} & \makecell*{$9.4\times10^{-1}$\\$d=0.01$} & \makecell*{$\mathbf{1.1\times10^{-5}}$\\$d=\mathbf{-1.70}$} & \makecell*{$\mathbf{2.2\times10^{-7}}$\\$\delta=\mathbf{-0.96}$} \\
\makecell*{Arcene} & \makecell*{$\mathbf{1.4\times10^{-7}}$\\$\delta=\mathbf{-0.97}$} & \makecell*{$\mathbf{1.3\times10^{-2}}$\\$d=\mathbf{-0.88}$} & \makecell*{$\mathbf{2.4\times10^{-8}}$\\$d=\mathbf{2.36}$} & \makecell*{$\mathbf{5.6\times10^{-4}}$\\$d=\mathbf{-1.24}$} & \makecell*{$\mathbf{2.5\times10^{-6}}$\\$d=\mathbf{-2.05}$} & \makecell*{$\mathbf{2.2\times10^{-7}}$\\$\delta=\mathbf{-0.97}$} & \makecell*{$\mathbf{1.0\times10^{-7}}$\\$\delta=\mathbf{-1.00}$} \\
\makecell*{Dorothea} & \makecell*{$\mathbf{7.5\times10^{-13}}$\\$d=\mathbf{-4.01}$} & \makecell*{$\mathbf{4.2\times10^{-8}}$\\$d=\mathbf{-2.25}$} & \makecell*{$\mathbf{4.0\times10^{-7}}$\\$d=\mathbf{2.13}$} & \makecell*{$\mathbf{2.8\times10^{-3}}$\\$\delta=\mathbf{-0.56}$} & \makecell*{$\mathbf{5.9\times10^{-6}}$\\$\delta=\mathbf{-0.84}$} & \makecell*{$\mathbf{2.6\times10^{-7}}$\\$\delta=\mathbf{-0.96}$} & \makecell*{$\mathbf{3.4\times10^{-7}}$\\$\delta=\mathbf{-0.95}$} \\
\bottomrule
\end{tabular}

\end{table*}

Fig.~\ref{fig:deltas} illustrates the sensitivity of the best achieved OOB performance to the tolerance parameter $\varepsilon$ on three representative datasets.
Fig.~\ref{fig:bars} summarizes the wall-clock tuning time and the selected number of trees $T$ for TPE, HB, ES, and PLATEAU across all datasets.

The option \param{only\_depth} is used to isolate the effect of tuning tree depth.
When \param{only\_depth}=``Yes'', the following hyperparameters are fixed at their default values:
\param{max\_features}=$\sqrt{p}$, \param{min\_samples\_split}=2, and \param{min\_samples\_leaf}=1,
so that only \param{max\_depth} is tuned over a predefined range \param{max\_depth\_range}.
When \param{only\_depth}=``No'', the above hyperparameters are also included in the search space.
All remaining values and ranges were kept at their defaults, as specified above in the ``Implementation details'' subsection.

We deliberately study the roles of \param{tune\_criterion} and \param{only\_depth} in hyperparameter tuning, because in practical workflows analysts often fix the split criterion and keep \param{max\_features}, \param{min\_samples\_split}, and \param{min\_samples\_leaf} at their default values.
Indeed, under a conservative tuning budget one can restrict the search to the tree depth and the number of trees: common heuristics such as \param{max\_features}=$\sqrt{p}$ (or $p/3$) are widely used for Random Forest, whereas \param{min\_samples\_leaf} and \param{min\_samples\_split} are to a large extent coupled with the depth constraint \param{max\_depth}.
We evaluate the effects of \param{tune\_criterion} and \param{only\_depth} for both the PLATEAU search and the classic TPE baseline, which also provides a useful standalone comparison of tuning protocols.

Next, we compare the resulting Random Forest OOB score across several controlled contrasts.
First, we study the effect of the trial budget by comparing $n_{\text{trials}}=120$ vs.\ $40$ at fixed dataset, \param{tune\_criterion}, \param{only\_depth}, and algorithm (Table~\ref{tab:n_trials_120_vs_40}).
Second, at $n_{\text{trials}}=120$ we compare ``Yes'' vs.\ ``No'' for \param{tune\_criterion} and, separately, for \param{only\_depth}, at fixed dataset and algorithm (Table~\ref{tab:criterion_depth_yes_vs_no}).
Third, we report additional ablations and sensitivity checks (Table~\ref{tab:joint_hb_sf}) that complement the main score-based comparisons by considering (i) joint vs.\ decoupled tuning of the tree budget and the remaining hyperparameters (for both TPE and PLATEAU), (ii) pruning behavior (HB vs.\ PLATEAU), (iii) computational cost comparisons (HB vs.\ TPE, in wall-clock time and total number of trees built), and (iv) the sensitivity of the PLATEAU procedure to the scale factor $\textnormal{sf}$ under otherwise fixed settings.
Each contrast compares two groups of 20 runs corresponding to different random seeds.

All reported score comparisons are therefore OOB-based tuning comparisons rather than independent held-out test-set evaluations.
This design is intentional: the goal of the study is to compare tree-count selection mechanisms under the same OOB-driven HPO objective.
As in any HPO procedure that selects the best configuration among many noisy validation estimates, the best observed OOB score may be affected by selection-induced upward bias.
This effect is not specific to PLATEAU and also applies to the OOB-based TPE, HB, and ES baselines considered here.

For each two-group comparison, we report both statistical significance (via a $p$-value) and an effect-size estimate.
We first assess normality within each group using the Shapiro--Wilk test.
If normality is supported, we use a two-sample $t$-test together with Cohen's $d$; otherwise, we use the Mann--Whitney $U$-test together with Cliff's $\delta$
(Field~\cite{Field2024Discovering}).
Effect sizes are additionally averaged over bootstrap resamples to stabilize the estimates under the small group size of $20$ runs.
We declare differences significant at the conventional level 
$p$-value $< 0.05$.
Following common guidelines, we treat effect sizes as practically negligible when Cohen's $d<0.5$ or Cliff's $\delta<0.28$.
Statistically notable entries are highlighted in bold in all tables.
Across the three tables, the $p$-values and effect sizes are typically consistent (both crossing or both staying within their respective thresholds), which supports the adequacy of the chosen testing protocol for our design.

In Table~\ref{tab:n_trials_120_vs_40}, we use a one-sided test for the $p$-value, since a larger trial budget can reasonably be expected to improve the best-achieved model.
For metrics where larger values indicate better performance (accuracy and ROC-AUC), this corresponds to testing whether the score is higher for $n_{\text{trials}}=120$.
For RMSE, where smaller values are better, the direction is reversed accordingly.
In Tables~\ref{tab:criterion_depth_yes_vs_no} and \ref{tab:joint_hb_sf}, we use two-sided tests.
A useful property of both Cohen's $d$ and Cliff's $\delta$ is that they are signed, indicating the direction of the effect.
The effect-size sign follows the sign of the difference between the group means (or locations): 
for ``120 vs.\ 40'' it corresponds to $\text{Score}_{120}-\text{Score}_{40}$, and for ``Yes vs.\ No'' to $\text{Score}_{\text{Yes}}-\text{Score}_{\text{No}}$.
As a result, for regression tasks a strong improvement can correspond to a negative sign, since lower RMSE implies better performance.
\subsection{Insights and Discussion}
As Fig.~\ref{fig:deltas} shows, smaller values of $\varepsilon$ impose a stricter plateau condition and therefore tend to push the search toward larger ensemble sizes, i.e., deeper into the plateau region (Fig.~\ref{fig:plateau}). Beyond the best achieved OOB metric, the figure also shows that decreasing the tolerance $\varepsilon$ is accompanied by a smaller run-to-run spread across random seeds (narrower boxes and shorter whiskers), that is, by higher reproducibility stability of the resulting OOB performance, as discussed above.

Fig.~\ref{fig:bars} suggests that, for most datasets, the proposed PLATEAU search with $\varepsilon=10^{-3}$ often achieves shorter tuning times than the fixed-range baselines TPE and HB with $T_{\max}=2565$. 
This pattern also holds for the high-dimensional \dataset{GeneExpressionCancer RNA-Seq} dataset, whereas notable exceptions include \textsc{Diabetes}, \dataset{Arcene}, and \dataset{Dorothea}. 
We stress, however, that this comparison is mainly illustrative rather than a strict ablation, since TPE and HB are controlled by $T_{\max}$, whereas PLATEAU is controlled by the tolerance $\varepsilon$. 
Accordingly, increasing $T_{\max}$ would make TPE and HB more expensive, whereas decreasing $\varepsilon$ would increase the cost of PLATEAU by forcing the search deeper into the plateau region.

HB is usually somewhat faster than TPE, although the effect is not large. A plausible explanation is that when the score grows only weakly over much of $[T_{\min},T_{\max}]$, TPE often samples ensemble sizes noticeably to the left of the right boundary, which partly offsets the advantage of Hyperband-style pruning. 
By contrast, HB attempts to traverse the ladder toward $T_{\max}$ within each trial, but many such traversals are interrupted earlier by pruning. 
As Fig.~\ref{fig:bars} indicates, the contrast between PLATEAU and these $T_{\max}$-guided baselines is stronger than the contrast between TPE and HB themselves.

A more direct ablation is given by PLATEAU versus ES, since both methods are tolerance-guided and do not rely on an explicit $T_{\max}$. 
As expected, PLATEAU typically selects larger ensemble sizes and therefore often incurs higher computational cost than ES. 
This is consistent with the downward bias of naive monotone early stopping, which tends to underestimate the sufficient number of trees because the stopping condition is checked sequentially from smaller to larger ensembles. 
The main exceptions are datasets for which PLATEAU performs left shifts and ultimately selects $T<T_{\min}=100$.

Finally, the high-dimensional datasets \dataset{Arcene} and especially \dataset{Dorothea} illustrate the main scenario in which an adaptive tree-count mechanism is useful.
For such problems, the OOB score may stabilize slowly, and rule-of-thumb values such as $T_{\max}\approx 2000$ or even $5000$ can be insufficient.
In this regime, PLATEAU tends to build substantially larger forests because it does not impose a fixed resource ladder and continues increasing the ensemble size until the relative right-side OOB change falls below the prescribed tolerance.
This behavior explains the exceptionally large selected tree counts and runtimes observed for \dataset{Dorothea}: the method treats the lack of an OOB plateau as evidence that the current ensemble size is still insufficient, rather than as a reason to stop early.
Thus, PLATEAU is expected to be most beneficial when the usual tree-count heuristics or fixed Hyperband ladder are too restrictive, whereas it may be computationally less attractive than HB or ES on datasets where the OOB curve stabilizes quickly or where an approximate, fixed-budget solution is sufficient.

Table~\ref{tab:n_trials_120_vs_40} compares the best achieved OOB score for
$n_{\text{trials}}=120$ against $n_{\text{trials}}=40$. 
Overall, increasing the
trial budget leads to statistically significant score changes in most cells,
which is expected because a larger number of trials gives the sampler more
opportunities to explore the hyperparameter space. 
Importantly, this pattern is
observed for both TPE and the proposed PLATEAU procedure. 
Thus, with respect to
the effect of the trial budget, PLATEAU behaves similarly to the standard TPE
baseline and does not reveal additional artifacts caused by the internal
tree-count adaptation.

A consistent trend in Table~\ref{tab:n_trials_120_vs_40} is that, at fixed
algorithm and fixed \param{tune\_criterion}, the columns with
\param{only\_depth} = ``No'' tend to contain smaller $p$-values than their
\param{only\_depth} = ``Yes'' counterparts. 
Some of the significant cells in
the restricted setting lie only in the borderline range $0.01$--$0.05$,
especially when only the tree depth is tuned. 
This is expected: when more
hyperparameters are tuned jointly, a larger trial budget is required for the
TPE sampler to explore the higher-dimensional space, whereas in the restricted
setting (\param{only\_depth} = ``Yes'') convergence typically occurs faster even
with fewer trials.

In contrast, a cell-wise comparison of
\param{tune\_criterion} = ``Yes'' and \param{tune\_criterion} = ``No'' in
Table~\ref{tab:n_trials_120_vs_40}, within each algorithm and at fixed
\param{only\_depth}, reveals no systematic advantage in either direction.
This suggests that enabling criterion tuning does not, by itself, demand a
substantially larger number of trials.
Taken together, these observations indicate that the effect of increasing
$n_{\text{trials}}$ is mainly governed by the effective dimensionality of the
tuned hyperparameter space, 
and that the proposed PLATEAU mechanism follows the
same qualitative pattern as TPE rather than introducing a qualitatively
different dependence on the trial budget.

Table~\ref{tab:criterion_depth_yes_vs_no} indicates that enabling split-criterion
tuning (\param{tune\_criterion}=``Yes'' vs.\ ``No'') rarely yields a statistically
significant improvement in the final score (left block of the table).
When statistically significant differences do appear, their magnitude is highly
dataset-dependent: in many settings the effect is small or negligible, but in
some cases (e.g., \dataset{CreditCardDefault}) criterion tuning can yield a
substantial gain.
Overall, this suggests that split-criterion tuning is an optional,
dataset-specific refinement rather than a consistently beneficial knob under a
fixed trial budget.

The right block of Table~\ref{tab:criterion_depth_yes_vs_no} compares depth-only
tuning with the broader search space that also includes
\param{max\_features}, \param{min\_samples\_split}, and
\param{min\_samples\_leaf}. 
In many datasets, expanding the tuned
hyperparameter set leads to statistically significant score changes, confirming
that these parameters can materially affect the bias--variance trade-off beyond
\param{max\_depth} alone. 
However, the sign and magnitude of the effect remain
strongly dataset-dependent. 
In several cases, including \dataset{Wine},
\dataset{BreastCancer}, \dataset{Arcene}, and \dataset{Dorothea}, the broader
search space does not uniformly improve the best score under the fixed budget
$n_{\text{trials}}=120$. 
This suggests that the benefit of enlarging the search
space can be offset by the additional optimization difficulty introduced by
higher-dimensional HPO.

Most importantly for the present study, the qualitative patterns in both blocks
of Table~\ref{tab:criterion_depth_yes_vs_no} are largely synchronized between
TPE and PLATEAU. 
Split-criterion tuning remains dataset-specific for both
methods, and the effect of moving from depth-only tuning to the broader search
space changes sign on essentially the same datasets. 
Thus, the observed
behavior is better interpreted as a property of the dataset and the HPO search
space under a fixed trial budget, rather than as a PLATEAU-specific artifact.

Table~\ref{tab:joint_hb_sf} complements the main comparisons by providing additional ablations and sensitivity checks.
The first two columns quantify the benefit of joint hyperparameter optimization over the corresponding decoupled (two-stage) strategies for TPE and PLATEAU.
Overall, these columns confirm that joint HPO does matter: tuning the remaining Random Forest hyperparameters at $T_{\min}$
and only then increasing or adapting the tree count is generally not equivalent to optimizing the tree budget and the remaining hyperparameters jointly.

For the TPE baseline, the first column shows that the decoupled TPE\_Tmin-Tmax strategy is often significantly worse than the fully joint TPE search, even though its second stage evaluates the selected configuration at $T_{\max}$.
This indicates that simply growing the forest after tuning at $T_{\min}$
does not compensate for the mismatch between the hyperparameters preferred by small and large ensembles.
For PLATEAU, the corresponding contrast is less uniform but still informative.
Several datasets show borderline or statistically significant differences between TPE\_Tmin-PLT and the fully joint PLATEAU procedure, indicating that joint tuning remains relevant.
A plausible explanation for the weaker separation in some cases is that the plateau-based second stage already adapts $T$
to the selected non-$T$ hyperparameters and therefore partially mitigates the decoupling effect.
Moreover, differences are expected to be less pronounced on datasets where the selected tree count remains close to the initial value $T_0$
(see Fig.~\ref{fig:bars}), so that using a small tree budget $T=T_0$
during the first stage is already relatively adequate.
In contrast, for datasets where the selected tree count is substantially larger or the hyperparameter space is harder to explore, as in \dataset{Dorothea}, the advantage of the fully joint protocol becomes more visible; this is precisely the regime targeted by the proposed plateau-based procedure, where the tree budget and the remaining Random Forest hyperparameters are optimized jointly rather than separated into two stages.

The third column of Table~\ref{tab:joint_hb_sf} shows that, under the standard choice $\eta=3$
and the ladder capped at $T_{\max}=2565$, Hyperband pruning is typically more aggressive than the plateau-specific pruning mechanism, i.e., it yields a larger fraction of pruned trials.
The fourth and fifth columns contextualize the computational cost of HB relative to TPE.
HB and TPE exhibit comparable wall-clock tuning times, whereas the total number of trees built over the entire tuning process tends to be smaller for HB, reflecting the role of the fixed ladder and rung-wise pruning.

Finally, the last two columns report the sensitivity of PLATEAU to the scale factor.
As expected, increasing $\textnormal{sf}$ from $1.5$ to $2.0$ generally increases computational cost, because a coarser geometric progression leads to larger candidate ensembles and hence larger right endpoints $R$ in the triplet, making individual trials more expensive.

Overall, Tables~\ref{tab:n_trials_120_vs_40}, \ref{tab:criterion_depth_yes_vs_no}, and~\ref{tab:joint_hb_sf} show that the main ablation effects are interpretable and largely consistent across the TPE and PLATEAU searches.
Increasing $n_{\text{trials}}$, enabling additional hyperparameter tuning in datasets where it is useful, relaxing the depth-only restriction in suitable datasets, and using joint hyperparameter optimization rather than a decoupled strategy generally affect performance in the expected direction, although the magnitude and even the sign of some effects remain dataset-dependent.
Most importantly, the qualitative impact of these tuning settings is broadly synchronized between TPE and PLATEAU, suggesting that the proposed method preserves the baseline HPO behavior under these ablations and does not introduce additional artifacts.

\section{Conclusion}
\label{sec:conclusion}
This work presents a method that fundamentally addresses the tuning of the previously ``untunable'' key parameter---the number of trees in a Random Forest---by jointly optimizing it with the remaining hyperparameters via an intuitive, relative-error-based plateau criterion.
The proposed triplet-based plateau-search algorithm, implemented in the accompanying \texttt{rf\_plateau\_hpo} library, automates this process and removes the need to predefine an arbitrary search range $[T_{\min},T_{\max}]$, replacing it with the more interpretable tolerance parameter $\varepsilon$.
At the same time, it mitigates the underestimation and unreliability of tree-count estimates that are typical of naive early-stopping schemes, without incurring the additional cost of multiple-forest training. 
Instead, it exploits the natural variability accumulated across HPO trials together with the across-trial dependence of the adaptive tree-count updates.

Our experiments yield the expected qualitative trends: increasing the trial budget and expanding the set of tuned hyperparameters generally improves the final model quality, and the plateau-based approach exhibits behavior consistent with the classic TPE baseline under these ablations.
At the same time, we emphasize that selecting $\varepsilon$ is a responsible modeling choice, especially for classification metrics, where it should not be set below the natural resolution of the empirical score.

The paper also contributes a theoretical interpretation of the proposed criterion. 
Under a power-law tail model for the score trajectory and a variance--covariance scaling motivated by earlier studies of finite-ensemble fluctuations, 
we linked the observed relative OOB-score difference to the gap between the current forest and the limiting score, and derived asymptotic expressions for the conditional variance of both the signed and absolute relative score differences. 
The resulting variance decay with ensemble size is consistent with the empirically observed bounded fluctuation regime of the adaptive tree-count updates across trials, rather than with unbounded growth toward ever larger forests. 
A more explicit probabilistic analysis of this across-trial dynamics remains a subject for future work.

Our algorithm serves as a critical first step before training the large-scale ensembles required for stable Variable Importance Measures in high-dimensional settings.
We posit that this makes the method particularly valuable for domains such as bioinformatics.
Consequently, a detailed benchmark on large-scale genomic and proteomic datasets represents a natural direction for future work.
Expanding the library and rigorously evaluating it in this context are the focus of a forthcoming in-depth study.

\bibliographystyle{IEEEtran}
\bibliography{refs, refs_stability}

\appendices
\section*{Appendix}
\begin{proof}[Proof of Proposition~\ref{prop:plateau_limit_alpha}]
From \eqref{eq:score_tail_alpha},
\begin{equation}
S_B-S_\infty = cB^{-\gamma}+o(B^{-\gamma}),
\label{eq:SB_limit_tail}
\end{equation}
and, since $R=\textnormal{sf}\cdot B$,
we similarly have $S_R-S_\infty = c\,\textnormal{sf}^{-\gamma}B^{-\gamma}+o(B^{-\gamma})$. 
Therefore,
\begin{equation}
S_B-S_R
=
c(1-\textnormal{sf}^{-\gamma})B^{-\gamma}+o(B^{-\gamma}).
\label{eq:SB_SR_tail}
\end{equation}
Because $c(1-\textnormal{sf}
^{-\gamma})\neq 0$, dividing \eqref{eq:SB_limit_tail} by
\eqref{eq:SB_SR_tail} yields \eqref{eq:ratio_limit_plateau}.
By reformulating \eqref{eq:ratio_limit_plateau} in the 
equivalent form
\begin{equation*}
|S_\infty-S_B|
=
\left(\frac{1}{1-\textnormal{sf}^{-\gamma}}+o(1)\right)|S_R-S_B|,
\qquad B\to\infty,
\end{equation*}
dividing by $|S_B|$ and applying the inequality \eqref{eq:plateau_condition_BR}, we obtain \eqref{eq:plateau_limit_alpha_rel}.
\end{proof}

\begin{proof}[Proof of Lemma~\ref{lemma:delta}]
Define
\begin{equation*}
g(x,y)=\frac{y-x}{x}=\frac{y}{x}-1,
\end{equation*}
so that $(S_R-S_B)/S_B=g(S_B,S_R)$.
Since $\mu_B\neq 0$, the function $g$ is differentiable in a neighborhood of $(\mu_B,\mu_R)$, and its second-order Taylor expansion takes the form
\begin{multline*}
g(S_B,S_R)\approx g(\mu_B,\mu_R)
+\nabla g(\mu_B,\mu_R)^\top
\begin{pmatrix}
S_B-\mu_B\\
S_R-\mu_R
\end{pmatrix} \\
+\frac12
\begin{pmatrix}
S_B-\mu_B\\
S_R-\mu_R
\end{pmatrix}^{\!\top}
H_g(\mu_B,\mu_R)
\begin{pmatrix}
S_B-\mu_B\\
S_R-\mu_R
\end{pmatrix},
\end{multline*}
where
\begin{equation*}
\nabla g(x,y) =
\begin{pmatrix}
-y/x^2 \\
1/x
\end{pmatrix}, \quad
H_g(x,y)=
\begin{pmatrix}
2y/x^3 & -1/x^2\\
-1/x^2 & 0
\end{pmatrix}.
\end{equation*}
Hence the zeroth-order term and the gradient are
\begin{equation*}
g(\mu_B,\mu_R) = \frac{\mu_R-\mu_B}{\mu_B}, \quad
\nabla g(\mu_B,\mu_R)=
\begin{pmatrix}
-\mu_R/\mu_B^2 \\ 1/\mu_B
\end{pmatrix},
\end{equation*}
whereas the second-order term can be written explicitly as
\begin{multline*}
\frac12
\begin{pmatrix}
S_B-\mu_B\\
S_R-\mu_R
\end{pmatrix}^{\!\top}
H_g(\mu_B,\mu_R)
\begin{pmatrix}
S_B-\mu_B\\
S_R-\mu_R
\end{pmatrix} \\
=
\frac{\mu_R}{\mu_B^3}(S_B-\mu_B)^2
-\frac{1}{\mu_B^2}(S_B-\mu_B)(S_R-\mu_R).
\end{multline*}

Taking the conditional expectation given $D$, 
only zeroth- and second-order terms contribute, whereas the linear term vanishes. 
So we obtain
\begin{equation*}
\mathbb{E}\left[\frac{S_R-S_B}{S_B}\middle|D\right]
\approx
\frac{\mu_R-\mu_B}{\mu_B}
+\frac{\mu_R}{\mu_B^3}\sigma_B^2
-\frac{1}{\mu_B^2}\sigma_{BR}.
\end{equation*}

Under the approximate bivariate normal representation of $(S_B,S_R)\mid D$, the first-order delta-method approximation gives (van der Vaart~\cite{Vaart1998Asymptotic})
\begin{IEEEeqnarray*}{rCl}
\IEEEeqnarraymulticol{3}{l}{\mathrm{Var}\left[\frac{S_R-S_B}{S_B}\;\bigg|\;D\right]} \\
&\qquad\approx&
\nabla g(\mu_B,\mu_R)^\top
\begin{pmatrix}
\sigma_B^2 & \sigma_{BR}\\
\sigma_{BR} & \sigma_R^2
\end{pmatrix}
\nabla g(\mu_B,\mu_R) \\
&\qquad=&
\begin{pmatrix}
-\dfrac{\mu_R}{\mu_B^2} & \dfrac{1}{\mu_B}
\end{pmatrix}
\begin{pmatrix}
\sigma_B^2 & \sigma_{BR}\\
\sigma_{BR} & \sigma_R^2
\end{pmatrix}
\begin{pmatrix}
-{\mu_R}/{\mu_B^2}\\[3mm]
{1}/{\mu_B}
\end{pmatrix} \\
& \qquad=&
\frac{\mu_R^2}{\mu_B^4}\sigma_B^2
+
\frac{1}{\mu_B^2}\sigma_R^2
-
2\frac{\mu_R}{\mu_B^3}\sigma_{BR}.
\end{IEEEeqnarray*}
\end{proof}

\begin{proof}[Proof of Proposition~\ref{prop:absolute_variance}]
With $R=\textnormal{sf}\cdot B$, $B\to\infty$, the tail condition for $\mu_T$ gives
$\mu_B=S_\infty+cB^{-\gamma}+o(B^{-\gamma})$ and
$\mu_R=S_\infty+c\,\textnormal{sf}^{-\gamma}B^{-\gamma}+o(B^{-\gamma})$, hence
$\left(\mu_R-\mu_B\right)/\mu_B=O(B^{-\gamma})$.
At the same time, by \eqref{eq:remark} and $\mu_B,\mu_R\to S_\infty$, 
the second-order term in the mean approximation \eqref{eq:delta_mean_G} given in Lemma~\ref{lemma:delta} is

\begin{equation*}
\frac{\mu_R}{\mu_B^3}\sigma_B^2
-\frac{1}{\mu_B^2}\sigma_{BR} =
\frac{v(1-\textnormal{sf}^{-1})}{S_{\infty}^2}B^{-1} + o(B^{-1}).
\end{equation*}
Therefore,
\begin{equation*}
\mathbb{E}\left[\frac{S_R-S_B}{S_B}\middle|D\right]
=
O(B^{-\gamma})+O(B^{-1})
=
O\!\left(B^{-\min(\gamma,1)}\right),
\end{equation*}
Moreover, the Taylor terms of order higher than two contribute only $o(B^{-1})$ to the conditional mean and therefore do not affect the above estimate.

On the other hand, Proposition~\ref{prop:delta_asymptotic} gives
\begin{equation*}
\mathrm{Var}\left[\frac{S_R-S_B}{S_B}\middle|D\right]
=
\frac{v(1-\textnormal{sf}^{-1})}{S_\infty^2}B^{-1}+o(B^{-1}).
\end{equation*}
Hence, if we denote
\begin{equation*}
\rho_B=
\frac{
\mathbb{E}\left[(S_R-S_B)/S_B\middle|D\right]
}{
\sqrt{
\mathrm{Var}\left[(S_R-S_B)/S_B\middle|D\right]
}
},
\end{equation*}
then
\begin{equation}
\rho_B=O\!\left(B^{1/2-\min(\gamma,1)}\right)\to 0,
\qquad B\to\infty,
\label{eq:mean_shift_vanishes}
\end{equation}
since $\gamma>1/2$.

Because $(S_R-S_B)/S_B$, conditionally on $D$, is approximately Gaussian, the absolute value
$\left|(S_R-S_B)/S_B\right|$ is approximately folded normally distributed, with variance
\begin{multline*}
\mathrm{Var}\left[\left|\frac{S_R-S_B}{S_B}\right|\middle|D\right]
\approx
\mathrm{Var}\left[\frac{S_R-S_B}{S_B}\middle|D\right] \\
\times
\Bigg(
1+\rho_B^2-
\bigg[
\sqrt{\frac{2}{\pi}}e^{-\rho_B^2/2}
+
\rho_B\bigl(1-2\Phi(-\rho_B)\bigr)
\bigg]^2
\Bigg).
\end{multline*}
Using \eqref{eq:mean_shift_vanishes} and passing to the limit, we obtain
\begin{equation*}
\mathrm{Var}\left[\left|\frac{S_R-S_B}{S_B}\right|\middle|D\right]
\approx
\left(1-\frac{2}{\pi}\right)
\mathrm{Var}\left[\frac{S_R-S_B}{S_B}\middle|D\right].
\end{equation*}
Substituting the asymptotic expression from Proposition~\ref{prop:delta_asymptotic} yields
\eqref{eq:prop3_main}.
\end{proof}

\begin{IEEEbiography}[{\includegraphics[width=1in,height=1.25in,clip,keepaspectratio]{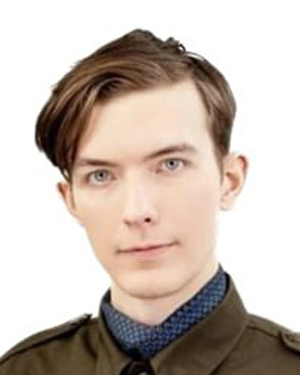}}]{Vadim Porvatov} is an M.S. graduate of the University of Amsterdam with expertise in graph machine learning, deep learning, and computer vision. His research experience includes development of applied machine learning models for real-world prediction tasks in temporal and graph-based domains.
\end{IEEEbiography}

\begin{IEEEbiography}[{\includegraphics[width=1in,height=1.25in,clip,keepaspectratio]{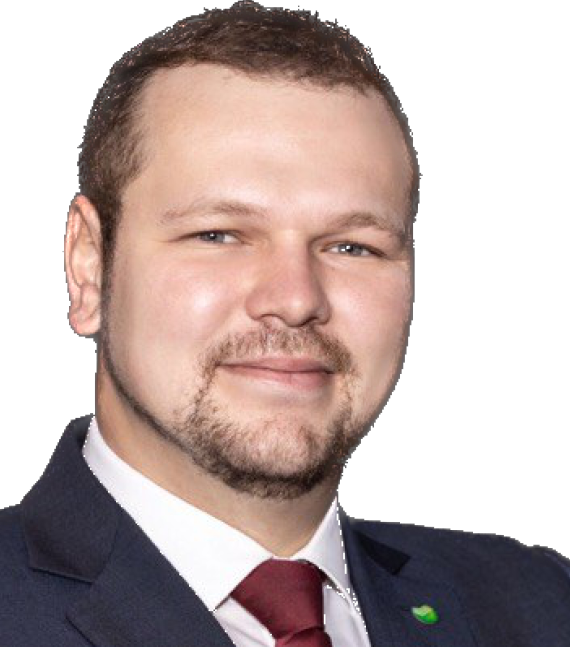}}]
{Andrey Dukhovny} graduated from the Faculty of Aerophysics and Space Research at the Moscow Institute of Physics and Technology and earned an M.S. in Economics from RANEPA in 2013. His key professional roles include Chief Data Scientist at Sberbank, where he led AI-driven transformation initiatives, and Deputy Head of Liquidity Risk Management with expertise in ALM.
\end{IEEEbiography}

\begin{IEEEbiography}[{\includegraphics[width=1in,height=1.25in,clip,keepaspectratio]{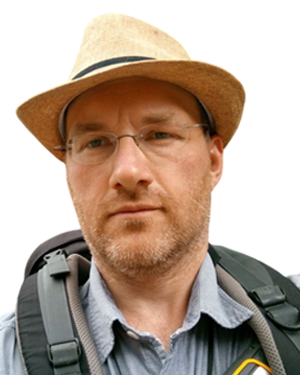}}]{Andrey Lange} received the M.Sc. degree in applied mathematics from the Bauman Moscow State Technical University (BMSTU), 
in 2002, and the Ph.D. degree in probability theory and statistics from BMSTU, in 2007. 
He is currently working 
at the Skolkovo Institute of Science and Technology (Skoltech) as Research Group Leader
and
at the Federal Research Center ``Computer Science and Control" of RAS as a Research Fellow. 
His research interests include stochastic processes, interpretable models in machine learning, and its applications.
\end{IEEEbiography}
\EOD
\end{document}